
\documentclass[10pt, a4paper]{article}

\usepackage[utf8]{inputenc}
\usepackage{amsmath,amssymb,amsthm}
\usepackage{graphicx}
\usepackage{url}
\usepackage{algorithm, algorithmic}
\usepackage{booktabs}
\usepackage{multirow}
\usepackage{tabularx}
\usepackage{xcolor}
\usepackage{rotating}
\usepackage{adjustbox}
\usepackage{epstopdf}
\usepackage{float}
\usepackage{makecell}
\usepackage{colortbl}

\usepackage[top=2.5cm, bottom=2.5cm, left=2.5cm, right=2.5cm]{geometry}

\usepackage{fancyhdr}
\pagestyle{fancy}
\fancyhf{} 
\fancyfoot[C]{\thepage} 

\usepackage{titlesec}
\titleformat{\section}{\large\bfseries}{\thesection}{1em}{}
\titleformat{\subsection}{\bfseries}{\thesubsection}{1em}{}
\titleformat{\subsubsection}{\bfseries}{\thesubsubsection}{1em}{}

\usepackage{authblk}

\makeatletter
\newenvironment{breakablealgorithm}
{
		\begin{center}
			\refstepcounter{algorithm}
			\hrule height.8pt depth0pt \kern2pt
			\renewcommand{\caption}[2][\relax]{
				{\raggedright\textbf{\ALG@name~\thealgorithm} ##2\par}%
				\ifx\relax##1\relax 
				\addcontentsline{loa}{algorithm}{\protect\numberline{\thealgorithm}##2}%
				\else 
				\addcontentsline{loa}{algorithm}{\protect\numberline{\thealgorithm}##1}%
				\fi
				\kern2pt\hrule\kern2pt
			}
		}{
		\kern2pt\hrule\relax
	\end{center}
}

\newtheorem{theorem}{Theorem}[section]
%

\numberwithin{equation}{section}
\makeatother

\AtBeginDocument{%
  }

\begin{document}

\title{A Multi-Level Framework for Multi-Objective Hypergraph Partitioning: Combining Minimum Spanning Tree and Proximal Gradient}

\author[1]{Yingying Li}
\author[1]{Mingxuan Xie}
\author[1]{Hailong You}
\author[2]{Yongqiang Yao}
\author[1,*]{Hongwei Liu}

\affil[1]{Xidian University, Xi'an, Shaanxi, China}
\affil[2]{Shihezi University, Shihezi, Xinjiang, China}

\affil[*]{Corresponding author: hwliuxidian@163.com}

\date{} 

\maketitle

\begin{center}
    \small
    \textbf{Emails:} 
    Yingying Li: yingyl99@163.com,
    Mingxuan Xie: xmx\_0417@163.com,
    Hailong You: hlyou@mail.xidian.edu.cn,
    Yongqiang Yao: yqyao2021@163.com,
    Hongwei Liu: hwliuxidian@163.com \\
\end{center}

\begin{abstract}
  This paper proposes an efficient hypergraph partitioning framework based on a novel multi-objective non-convex constrained relaxation model. A modified accelerated proximal gradient algorithm is employed to generate diverse $k$-dimensional vertex features to avoid local optima and enhance partition quality. Two MST-based strategies are designed for different data scales: for small-scale data, the Prim algorithm constructs a minimum spanning tree followed by pruning and clustering; for large-scale data, a subset of representative nodes is selected to build a smaller MST, while the remaining nodes are assigned accordingly to reduce complexity. To further improve partitioning results, refinement strategies including greedy migration, swapping, and recursive MST-based clustering are introduced for partitions. 
  
  Experimental results on public benchmark sets demonstrate that the proposed algorithm achieves reductions in cut size of approximately 2\%--5\% on average compared to KaHyPar in 2, 3, and 4-way partitioning, with improvements of up to 35\% on specific instances. Particularly on weighted vertex sets, our algorithm outperforms state-of-the-art partitioners including KaHyPar, hMetis, Mt-KaHyPar, and K-SpecPart, highlighting its superior partitioning quality and competitiveness. Furthermore, the proposed refinement strategy improves hMetis partitions by up to 16\%. A comprehensive evaluation based on virtual instance methodology and parameter sensitivity analysis validates the algorithm's competitiveness and characterizes its performance trade-offs.
\end{abstract}

\textbf{Keywords:} Hypergraph Partitioning; Multi-objective Optimization; Minimum Spanning Tree; Clustering; Constrained Optimization; Proximal Gradient

\section{Introduction}\label{sec1}
Hypergraphs are a generalization of graphs where each hyperadge can connect more then two vertices.
A weighted undirected hypergraph \(\mathcal{H} = (\mathcal{V}, \mathcal{E}, B, \varpi)\) consists of a set of \(n\) vertices \(\mathcal{V} = \{v_1, v_2, \dots, v_n\}\), a set of \(m\) hyperedges \(\mathcal{E} = \{e_1, e_2, \dots, e_m\}\), a weight function of the vertex \(B: \mathcal{V} \to \mathbb{R}\), and a net weight function \(\varpi: \mathcal{E} \to \mathbb{R}\).

In practical applications, the hypergraph partitioning problem is widely used to optimize circuit layout performance \cite{VLSI1999}, data center network design \cite{CenterNetwork2015}, distributed database storage \cite{Social2017}, and transaction cost optimization in distributed systems \cite{Community2015}. Specifically, it aims to divide the vertex set $\mathcal{V}$ into $k$ mutually disjoint subsets ${\Pi_1, \Pi_2, \dots, \Pi_k}$, such that the cut cost is minimized while satisfying load balance constraints. The mathematical formulation is as follows:
\begin{align}
\text{cutsize} = \sum_{e \in \mathcal{E}} \varpi_e (\mu_e - 1),
\end{align}
where $\mu_{e}$ is the number of partitions spanned by the hyperedge $E$.
Simultaneously satisfying the constraint conditions,
\begin{itemize}
\item[\textbf{a:}] $\bigcup_{i=1}^{k} \Pi_i = \mathcal{V}$, and $\Pi_i \cap \Pi_j = \emptyset$ for all $i \neq j$;
  \item[\textbf{b:}] $\sum_{v \in \Pi_i} B(v) \leq U_i := (1 + \varepsilon) \left\lceil \frac{\sum_{v \in \mathcal{V}} B(v)}{k} \right\rceil$,
\end{itemize}
where $\varepsilon$ is a balance factor, $U_i$ is the upper bound constraint of resources.

\subsection{Related works}

As early as 1970, Kernighan and Lin \cite{KL1970} proposed the classical graph bisection algorithm (KL algorithm), which begins with an initial partition obtained through breadth-first traversal and iteratively swaps nodes to reduce the number of cut edges. Later, Fiduccia and Mattheyses \cite{FM1982} extended this approach to hypergraphs and introduced the FM algorithm. This method evaluates each vertex by its gain the estimated reduction in cut cost when moved and greedily selects the vertex with the highest gain in each iteration. It then updates the gains of neighboring vertices and ultimately applies the sequence of moves yielding the highest cumulative gain. Building upon this foundation, numerous studies have enhanced and extended the FM algorithm for more complex scenarios. For instance, Dasdan et al. \cite{dasdan1997two} proposed an efficient bidirectional FM algorithm, while Dutt et al. \cite{dutt2002cluster} refined the cluster-based partitioning strategy. Others have extended FM to $k$-way partitioning \cite{kwayfirst, Tang2024}, improving gain computation and node assignment mechanisms.

In multilevel partitioning frameworks, FM is widely adopted in the refinement stage to locally adjust vertex placement and improve partition quality. These frameworks typically consist of three stages: coarsening, initial partitioning, and refinement. Representative tools include hMetis \cite{khmetis1999} and KaHyPar \cite{kahypar2020advanced, kahypar2022parallel, kahypar2023high} and so on.

In recent years, with the widespread application of hypergraphs in circuit design, parallel computing, and data management, the demand for efficient and high quality hypergraph partitioning methods has significantly increased. Traditional multilevel partitioning frameworks are known for their high computational efficiency, especially on large-scale datasets. However, they often fall short in solution quality particularly in complex scenarios involving heterogeneous constraints or global optimization objectives due to their susceptibility to local optima. To address these limitations, researchers have increasingly explored optimization driven paradigms, such as integer linear programming (ILP), structural modeling, and continuous optimization, to formulate more expressive models for hypergraph partitioning and achieve better solutions.

In 2024, Bustany et al.~\cite{KSpecPart} proposed K-SpecPart, which first constructed a path graph through vertex embedding and generated a minimum spanning tree (MST) or a low-stretch spanning tree (LSST). Dynamic programming was then employed to reassign edge weights, better reflecting the original hypergraph’s cut structure. Based on this weighted tree, the method performed multi-way partitioning using recursive bisection, METIS tools, or VILE heuristics. Furthermore, ILP was applied to integrate multiple candidate solutions and enhance the final partition quality. Experimental results showed that the method achieved up to a 50\% reduction in cut cost while maintaining strong scalability on large-scale datasets, significantly improving both efficiency and partitioning quality. More clustering methods based on minimum spanning tree can be found in the literature
\cite{KHAN20221113, RDMN2022, MISHRA2020}.

Building upon the structural modeling paradigm, TritonPart \cite{Tritionpart} integrated a multilevel framework to enhance both modeling expressiveness and solution quality. In its coarsening phase, multiple candidate coarse hypergraphs were generated via various vertex ordering strategies and edge aggregation heuristics. For initial partitioning, TritonPart leveraged ILP solvers (e.g., OR-Tools as an open-source alternative to CPLEX), guided by spectral embeddings provided by SpecPart. In the refinement stage, FM (Fiduccia–Mattheyses) or greedy hyperedge refinement (HER) algorithms were employed to improve partition quality locally. To further enhance global structure, TritonPart adopted a cut-overlay clustering strategy to fuse multiple candidate partitions into a new reduced hypergraph, enabling recursive refinement under a V-cycle framework. While TritonPart offered strong support for practical constraints such as fixed nodes, grouping, and balance control, our focus in this work was on evaluating general-purpose multilevel partitioners; hence, we limited our empirical comparisons to well-established baselines such as KaHyPar and hMetis.

It is noteworthy that although ILP models theoretically provide globally optimal solutions for hypergraph partitioning problems, their NP-hard nature often leads to prohibitively long computation times and excessive memory usage in large-scale practical applications. Mainstream commercial solvers such as CPLEX and Gurobi struggle to efficiently handle instances with rapidly growing numbers of variables and constraints. To overcome these difficulties, researchers have widely adopted continuous optimization techniques as alternatives, relaxing the original ILP formulations into LP, semidefinite programming (SDP), or unconstrained optimization problems, and employing numerical methods such as gradient descent, Lagrangian multipliers, and conjugate gradient methods for efficient solving. For example, Nguyen et al. \cite{Nguyen2016LP} transformed graph partitioning ILPs with knapsack constraints into compact LP models and combined heuristic strategies to construct upper bounds; Wu et al. \cite{WU2019191} proposed a continuous relaxation model based on Deterministic annealing neural networks (DANN), introducing an entropy potential function and achieving global convergence via Lagrangian methods; Wiegele and Zhao \cite{Wiegele2022COA} leveraged SDP relaxations and alternating direction method of multipliers (ADMM) to obtain tighter lower bounds; Sun et al. \cite{sun2025} further reformulated weighted and fixed-constraint partitioning problems as unconstrained optimizations and proposed an efficient recursive bipartitioning algorithm based on subspace minimization and conjugate gradient methods. These approaches significantly enhance solution quality and computational efficiency in large-scale partitioning, demonstrating the broad applicability and theoretical depth of continuous optimization in this domain.

Despite the widespread use of continuous optimization methods in graph and hypergraph partitioning, some classical optimization algorithms remain underexplored. The proximal gradient method is one such promising approach, recognized for its simple iterative structure and efficient computation, with successful applications in non-smooth optimization problems in image processing and machine learning. Its standard iteration is expressed as
\[
x_{k+1} = \mathrm{prox}_{\alpha_k g}(x_k - \alpha_k \nabla F(x_k)),
\]
where $\alpha_k$ is the step size, and $\mathrm{prox}_{\alpha_k g}$ denotes the proximal operator of $\alpha_k g$. In 2024, Wang et al. \cite{wang2024class} proposed a modified accelerated proximal gradient for nonconvex case (\textbf{modAPG\_nc}) and rigorously established local convergence rates under the Kurdyka–\L{}ojasiewicz (K\L{}) property of the objective function. Although this method has demonstrated excellent performance in other optimization scenarios, its extension to non-convex, non-smooth hypergraph partitioning remains unexplored, presenting new research opportunities and potential for developing efficient partitioning algorithms based on \textbf{modAPG\_nc}.

\subsection{Contributions}
Inspired by the above works, we integrate multiple strategies to propose a novel hypergraph partitioning approach.
The principal contributions of this paper are as follows.
\begin{enumerate}
\item

This paper follows a multilevel partitioning framework. In the initial partitioning phase, the coarse hypergraph is transformed into a graph and modeled as a constrained multi-objective optimization problem. A modified accelerated proximal gradient algorithm is employed to generate $k$-dimensional vertex features. By varying multiple parameters, different feature matrices are obtained, producing diverse partition solutions that effectively avoid local optima.
\item 
    The proposed framework includes two MST-based partitioning methods for varying data scales.
    Small-scale data is partitioned by constructing and cutting an MST via the Prim algorithm. For large-scale data, the top 20\% weighted nodes form a representative MST, and remaining nodes are assigned accordingly. Combining pruning and resource-constrained clustering achieves compact, balanced partitions with reduced computational complexity.
\item  
    This paper also presents optimization strategies for existing partitions. The algorithm combines the MST with a clustering method, intelligently selects two cluster centers for bipartitioning, and iteratively improves the initial partition results.
\item
    Numerical experiments on multiple benchmarks demonstrate that the proposed algorithm achieves the lowest cut sizes for 2, 3, and 4-way partitioning. It consistently outperforms state-of-the-art partitioners, with an average improvement of 2\%--5\% over KaHyPar and up to 72\% on specific instances. The algorithm produces optimal or near-optimal results in most cases, underscoring its significant superiority over existing leading methods.
 \item  The improvement strategy optimizes the hMetis partition results, reducing the cut size by up to 16\% with an overall improvement of 83\%. Parameter sensitivity analysis shows that the algorithm remains stable under perturbations.

\end{enumerate}

The remainder of this paper is organized as follows.
Section~\ref{sect2} introduces the basic definitions and overall framework.
Section~\ref{sect3} formulates the problem as a multi-objective model and solves it using the \textbf{modAPG\_nc} algorithm.
Section~\ref{sect4} presents two strategies for generating initial partitions.
Section~\ref{sect5} describes methods for refining feasible partitions and improving infeasible ones.
Section~\ref{sect6} reports experimental results that demonstrate the effectiveness of the proposed approach.
Section~\ref{sect7} concludes the paper.

\section{Preliminaries}\label{sect2}

\subsection{Basic definitions and lemmas}
A hypergraph can be transformed into a conventional graph via the \emph{clique expansion} method \cite{Karypis1997VLSI}, where vertices within each hyperedge are pairwise connected to form a clique, and edge weights are accordingly defined. Specifically, if vertices $i$ and $j$ share a set of hyperedges $\Omega(i,j)$, the corresponding adjacency matrix entry is given by:
\[
a_{ij} = \sum_{e_k \in \Omega(i,j)} \frac{\varpi_{e_k}}{|e_k| - 1},
\]
where $|e_k|$ is the number of vertices in hyperedge $e_k$, and $\varpi_{e_k}$ denotes its weight. If $i$ and $j$ are not connected, then $a_{ij} = 0$. Considering an undirected graph, we have $a_{ii} = 0$ and $a_{ij} = a_{ji}$.

The resulting adjacency matrix $A = (a_{ij}) \in \mathbb{R}^{n \times n}$ encodes the connection strength between vertex pairs. Based on $A$, the graph Laplacian matrix $L$ is defined as:
\begin{itemize}
  \item[(i)] For $i \neq j$, $L(i,j) = -a_{ij}$;
  \item[(ii)] For the diagonal entries, $L(i,i) = \sum_{j \neq i} a_{ij}$, i.e., the degree of vertex $i$.
\end{itemize}

A graph in which every pair of distinct vertices is connected by an edge is called a \emph{complete graph}. When the vertex set is partitioned into $k$ mutually disjoint subsets such that every pair of vertices belonging to different subsets is connected by an edge, and there are no edges within the same subset, the graph is called a \emph{complete $k$-partite graph}. A graph is \emph{connected} if there exists a path between any two vertices. An \emph{undirected graph} that is connected and contains no cycles is called a \emph{tree}. A tree is a special graph structure with $n$ vertices and $n-1$ edges, in which there exists exactly one path between any two vertices. For a connected weighted undirected graph with nonnegative edge weights, a \emph{minimum spanning tree} (MST) is a spanning tree of the graph whose total edge weight is minimal. Although the MST is not necessarily unique, its total weight is the smallest among all spanning trees.

\begin{theorem}[Cycle Property]\label{Cycle Property}
Let $C$ be a cycle in a connected undirected graph $G$, and let $a_0$ be the edge with the maximum weight in $C$. Then the MST of $G \setminus a_0$ is also a MST of $G$.
\end{theorem}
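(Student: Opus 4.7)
The plan is to prove this by a standard exchange argument, showing that no minimum spanning tree of $G$ can gain anything by including the heavy cycle edge $a_0$. First I would record two preliminary observations that make the statement well-posed: (i) since $a_0$ lies on the cycle $C$, the graph $G\setminus a_0$ remains connected, because any path in $G$ using $a_0$ can be rerouted along $C\setminus\{a_0\}$; hence $G\setminus a_0$ admits spanning trees and in particular an MST. (ii) Every spanning tree of $G\setminus a_0$ is automatically a spanning tree of $G$, so the weights of the two MSTs can be compared directly.

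Let $T$ denote an arbitrary MST of $G\setminus a_0$ with total weight $w(T)$, and let $T^{*}$ denote an MST of $G$ with weight $w(T^{*})$. The goal is to establish $w(T)=w(T^{*})$. Clearly $w(T^{*})\le w(T)$ since $T$ is a spanning tree of $G$. For the reverse inequality, I would split into two cases according to whether $a_0\in T^{*}$.

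If $a_0\notin T^{*}$, then $T^{*}$ is itself a spanning tree of $G\setminus a_0$, so by minimality of $T$ in $G\setminus a_0$ we get $w(T)\le w(T^{*})$, and we are done. If instead $a_0\in T^{*}$, the key exchange step applies. Deleting $a_0$ from $T^{*}$ disconnects the tree into two components $V_1,V_2$. Since $a_0$ lies on the cycle $C$, walking around $C\setminus\{a_0\}$ yields a path from one endpoint of $a_0$ to the other entirely inside $G\setminus a_0$; this path must cross the cut $(V_1,V_2)$, producing an edge $e\in C\setminus\{a_0\}$ that reconnects $V_1$ and $V_2$. Because $a_0$ is the maximum-weight edge of $C$, we have $w(e)\le w(a_0)$. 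Exchanging gives the spanning tree $T^{**}=(T^{*}\setminus\{a_0\})\cup\{e\}$ of $G$ with
\[
w(T^{**}) = w(T^{*})-w(a_0)+w(e)\le w(T^{*}).
\]
Since $T^{*}$ was an MST, equality must hold, so $T^{**}$ is an MST of $G$ not containing $a_0$, reducing this subcase to the previous one and yielding $w(T)\le w(T^{**})=w(T^{*})$.

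I do not anticipate a serious obstacle here; the only subtlety is the cut-crossing argument that produces the replacement edge $e$, which must be stated carefully (e.g.\ by traversing $C\setminus\{a_0\}$ and taking the first edge with endpoints in different components of $T^{*}\setminus\{a_0\}$). Once that edge is identified, the maximum-weight assumption on $a_0$ immediately gives $w(e)\le w(a_0)$, and the exchange completes the proof.
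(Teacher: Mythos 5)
Your proof is correct. The paper actually states the Cycle Property without proof, treating it as a classical fact that is then invoked in the proofs of the later MST theorems in Section 4, so there is no in-paper argument to compare against; your exchange argument (delete $a_0$ from an MST containing it, reroute along $C\setminus\{a_0\}$ to find a cut-crossing edge $e$ with $w(e)\le w(a_0)$, and swap) is the standard and complete way to establish it. You also correctly handle the non-uniqueness subtlety by showing every MST of $G\setminus a_0$ attains the optimal weight $w(T^{*})$, which is exactly what the statement needs.
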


\begin{theorem}[Balanced Partition Maximizes Inner Product]\label{MAXBalanced}
Let a vertex set \(\mathcal{V}\) be partitioned into \(k\) disjoint subsets \(\Pi_1, \Pi_2, \dots, \Pi_k\), subject to the total sum constraint:
\[
\sum_{i=1}^k a_i = C,
\]
where \(a_i = |\Pi_i|\) in the unweighted case, or \(a_i = \sum_{v \in \Pi_i} B_v\) in the weighted case.

Then the objective function
\[
\Psi(a_1, \dots, a_k) = \sum_{1 \le i < j \le k} a_i a_j = \frac{1}{2} \left( \left( \sum_{i=1}^k a_i \right)^2 - \sum_{i=1}^k a_i^2 \right)
\]
achieves its maximum if and only if \(a_1 = a_2 = \dots = a_k = \frac{C}{k}\).

\end{theorem}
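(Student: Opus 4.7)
The plan is to reduce the maximization of $\Psi$ under the linear constraint to a standard sum-of-squares minimization, and then close the argument with a Cauchy--Schwarz (equivalently, quadratic--arithmetic mean) inequality whose equality case pins down the optimizer uniquely.

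First I would verify the displayed algebraic identity
\[
\Psi(a_1,\dots,a_k)=\sum_{1\le i<j\le k} a_i a_j = \tfrac{1}{2}\Bigl(\bigl(\textstyle\sum_{i=1}^k a_i\bigr)^{2} - \textstyle\sum_{i=1}^k a_i^2\Bigr),
\]
by simply expanding $(\sum_i a_i)^2=\sum_i a_i^2 + 2\sum_{i<j} a_i a_j$ and rearranging. This is a one-line calculation, and it shows that once the constraint $\sum_i a_i = C$ is imposed, the first summand becomes the constant $C^2$, so maximizing $\Psi$ over the feasible set is equivalent to minimizing $S(a_1,\dots,a_k):=\sum_{i=1}^k a_i^{2}$ over the same set.

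Next I would apply the Cauchy--Schwarz inequality (or equivalently the QM--AM inequality) to the vectors $(a_1,\dots,a_k)$ and $(1,\dots,1)$:
\[
C^{2}=\Bigl(\sum_{i=1}^{k} a_i\cdot 1\Bigr)^{2}\le k\sum_{i=1}^{k} a_i^{2},
\]
so $S\ge C^{2}/k$. Substituting this back yields $\Psi\le \tfrac{1}{2}(C^{2}-C^{2}/k)=\tfrac{k-1}{2k}C^{2}$, which is the claimed upper bound on the objective.

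Finally, I would nail down the equality case. The Cauchy--Schwarz step is tight exactly when $(a_1,\dots,a_k)$ is proportional to $(1,\dots,1)$, i.e.\ $a_1=\cdots=a_k$; combined with the constraint $\sum_i a_i = C$ this forces $a_i = C/k$ for every $i$, and a direct substitution shows this value is indeed attained. The main obstacle, such as it is, is really just bookkeeping: one should note that the argument is dimension- and weighting-agnostic, so it applies uniformly both to the unweighted case ($a_i=|\Pi_i|$) and to the weighted case ($a_i=\sum_{v\in\Pi_i}B_v$), provided the feasibility of the balanced configuration $a_i=C/k$ is not precluded by integrality or by individual vertex weights exceeding $C/k$. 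Since the statement of the theorem concerns the continuous maximization of $\Psi$ subject only to the scalar sum constraint, no such issue arises, and the proof is complete.
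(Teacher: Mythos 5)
Your proposal is correct and follows essentially the same route as the paper: both reduce maximizing $\Psi$ under the fixed-sum constraint to minimizing $\sum_i a_i^2$ and then invoke the QM--AM (Cauchy--Schwarz) inequality, whose equality case forces $a_i = C/k$. Your version merely spells out the explicit bound $\Psi \le \tfrac{k-1}{2k}C^2$ and the equality analysis in more detail than the paper's terser argument.
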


\begin{proof}
Since the total sum \(\sum_{i=1}^k a_i = C\) is fixed, maximizing \(\Psi\) is equivalent to minimizing
\[
\sum_{i=1}^k a_i^2.
\]
By the inequality of arithmetic and quadratic means, the sum of squares is minimized when all \(a_i\) are equal. Hence,
\[
a_1 = a_2 = \dots = a_k = \frac{C}{k},
\]
which leads to the maximum value of \(\Psi\).
\end{proof}

The definitions of other symbols and commands are shown in Table \ref{table11} and these symbols are applied throughout the text.
\begin{table}[!htbp]
\centering
\caption{Notations}
\label{table11}
\begin{tabular}{cl}
\hline
\textbf{Symbol} & \textbf{Description} \\
\hline
$\arg\min$ & Argument that minimizes a given function \\
$\arg\max$ & Argument that maximizes a given function \\
$\operatorname{diag}(\cdot)$  & Extracts the diagonal elements of a matrix as a vector \\
$\operatorname{Diag}(\cdot)$  & Diagonal matrix with the input vector on the diagonal \\
\textbf{1}  & $n$-dimensional column vector of all ones \\
$\lceil\;\cdot\;\rceil$ & Ceiling function (rounding up to the nearest integer) \\
$\lfloor\;\cdot\;\rfloor$ & Floor function (rounding down to the nearest integer) \\
$\cup$ & Union of sets (all elements in either set) \\
$\cap$ & Intersection of sets (elements common to both sets) \\
$\| \cdot \|_{2}$ & $\ell_2$ norm (Euclidean norm; square root of the sum of squares) \\
$|\cdot|$ & Absolute value or set cardinality \\
$\setminus$ & Set difference (elements in one set but not in another) \\
\hline
\end{tabular}
\end{table}

\subsection{The main framework of the proposed algorithm}

This work adopts a multilevel partitioning framework, where the hypergraph is coarsened via vertex clustering to reduce problem size. A relaxed continuous optimization method is used to extract structural information for guiding initial partitioning. MSTs are constructed on reduced graphs to facilitate initial cuts and local improvement. Finally, the classical $k$-way FM algorithm \cite{Tang2024} is applied during uncoarsening to refine and project the partitioning back to the original hypergraph.

During the coarsening phase, we use a matching score function to measure the strength of connection between vertex pairs. For any pair of vertices $(v_i, v_j)$, the matching score is defined as:
\begin{align}\label{eq:edge-score}
\Gamma(v_i, v_j) = \sum_{e \in I(v_i) \cap I(v_j)} \frac{\varpi_{e}}{\max(1, |e| - 1)},
\end{align}
where $I(v)$ denotes the set of hyperedges incident to vertex $v$ and $|e|$ is the number of vertices in hyperedge $e$. This function favors merging vertex pairs that frequently co-occur in smaller hyperedges, effectively preserving local structure.
Using this scoring function, we construct a matching set $M$. For each unmatched vertex $v_i$, we select a neighbor $v_j$ from its adjacent set $\mathcal{Z}(v_i)$ such that:
\begin{equation}\label{eq:matching-set}
    M = \{ (v_i, v_j) \mid v_j = \arg\max_{u \in \mathcal{V} \setminus M} \Gamma(v_i, u), B(v_i) + B(v_j) \leq U(1) \}.
\end{equation}
where $B(v)$ is the weight of vertex $v$. Only vertex pairs satisfying capacity constraints are eligible for matching.

The pseudo code of the overall algorithm \ref{framework} is given below.

\begin{breakablealgorithm}
\caption{Multilevel MST-based hypergraph partitioning framework}
\label{framework}
\begin{algorithmic}[1]
\REQUIRE Hypergraph $\mathcal{H}$, Vertex Weights $B$, Vertex Set $\mathcal{V}$, Number of partitions $k$
\ENSURE Feasible partitioning result $\Pi^{\text{opt}}$

\STATE /* \textbf{Multilevel Coarsening Phase} */
\STATE The iteration starts from $i=0$
\WHILE{$|\mathcal{V}| > 625k$ or $M = \emptyset$ or $|\mathcal{V}|^{i} > 0.8*|\mathcal{V}|^{i-1}$ or $i < 20$}
    \STATE Compute edge contraction score $C(v_i, v_j)$ according to Eq.~\eqref{eq:edge-score}
    \STATE Construct matching set $M$ based on Eq.~\eqref{eq:matching-set}
    \STATE Contract matched vertex pairs and update vertex and hyperedge structures
    \STATE Generate coarsened hypergraph $\mathcal{H}^{(l)}$, $i++$
\ENDWHILE

\STATE /* \textbf{Initial Partitioning Phase} */
\STATE Expand the coarsened hypergraph $\mathcal{H}^{(l)}$ into a graph via \emph{clique expansion}.
\STATE Generate $s$ initial partition candidates $\Pi^{(l_{1})}$, $\Pi^{(l_2)}$, $\dots$, $\Pi^{(l_s)}$ \COMMENT{see Section~\ref{sect4}}
\STATE Perform optimization and enhancement on each partition $\Pi^{(l_i)}$\COMMENT{see Section~\ref{sect5}}
\STATE Select the best initial partition: $\Pi^{(l_{\text{opt}})} \gets \arg\min \text{cutsize}(\Pi^{(l_i)})$

\STATE /* \textbf{Uncoarsening and Refinement Phase} */
\FOR{each level $r = l \to 1$}
    \STATE Project solution to the finer level: $\Pi^{(r_{\text{opt}})} $
    \STATE Refine using $k$-way FM algorithm: $\Pi^{(r_{\text{opt}})} $
\ENDFOR
\end{algorithmic}
\end{breakablealgorithm}

\section{Modeling and solving multi-objective optimization problems} \label{sect3}

\subsection{Modeling multi-objective initial partitioning}\label{sect31}

The $k$-way graph partition problem can be written as
\begin{align}\label{eg3.11}
 \nonumber \mathop {\min } \ \ &\frac{1}{2}\langle L,XX^\top\rangle\\
 s.t. \ \  & X\in\{0,1\}^{n\times k},
\end{align}
where $X$ is a partition matrix containing only elements 0 and 1.
For example, consider the following matrix representation,
\[
X =
\begin{bmatrix}
0 & 0 & 0 & 1\\
1 & 0 & 0 & 0\\
0 & 1 & 0 & 0\\
0 & 0 & 0 & 1\\
0 & 0 & 1 & 0\\
0 & 0 & 1 & 0
\end{bmatrix},
\]
where each row corresponds to a vertex. For instance, the $i$-th row represents vertex $i$;
each column corresponds to a group, meaning the $j$-th column represents group $j$.
An entry $X(i, j) = 1$ indicates that vertex $i$ belongs to group $j$; otherwise, the value is 0.
Then (\ref{eg3.11}) equivalent to,
\begin{align}\label{eg3.12}
 \nonumber \mathop {\min } \ \ &-\frac{1}{2}\langle A,XX^\top\rangle\\
 s.t. \ \  & X\in\{0,1\}^{n\times k}.
\end{align}
We define \(\overline{A} = \text{Diag}\left(\sum_{j=0}^{n} a_{ij} \right) + A\) such that the adjacency matrix \( \overline{A} \) is diagonally dominant, making it a positive definite matrix.
This adjustment only adds a constant term to the previous objective function and does not change the original problem. For convenience, let \(f(V)= -\frac{1}{2}\langle \overline{A}, XX^\top\rangle\).

Consider the constraint \textbf{b},
let \( G_u \) and \( G_w \) denote the Laplacian matrices of the unit and weighted complete graphs, respectively.
Due to the following identities:
\begin{align}
&\frac{1}{2} \left\langle G_u, XX^\top \right\rangle
= \sum_{1 \le i < j \le k} |\Pi_i| \cdot |\Pi_j| \notag,\\
&\frac{1}{2} \left\langle G_w, XX^\top \right\rangle
= \sum_{1 \le i < j \le k}
\left( \sum_{X \in \Pi_i} B_v \right) \cdot \left( \sum_{u \in \Pi_j} B_u \right) \notag,
\end{align}

According to Lemma \ref{MAXBalanced}, maximizing the two expressions above drives the partition toward a balanced state. Consequently, this objective function can serve as an effective surrogate for the strict balance constraint \textbf{b}. For clarity, we define the following two functions:
\[
g_u(X) = - \frac{1}{2} \langle G_u, XX^\top \rangle, \quad \text{and} \quad g_w(X) = - \frac{1}{2} \langle G_w, XX^\top \rangle.
\]


Based on the above analysis, we now propose a multi-objective optimization model for generating an initial $k$-way partition. This formulation simultaneously considers three criteria:
\begin{itemize}
    \item The original objective function \(f(X)\), which may encode application-specific requirements such as cut-size.
    \item The function \(g_u(X)\), which promotes balanced partition sizes by penalizing imbalance in the number of vertices across blocks.
    \item The function \(g_w(X)\), which encourages balance in vertex weights, ensuring equitable distribution of capacity.
\end{itemize}

The resulting multi-objective model is as follows:
\begin{align}\label{eg3.2}
 \nonumber \mathop {\min } \ \ &f(X),\: g_{u}(X),\: g_{w}(X)\\
 \text{s.t.} \quad & X \in \{0,1\}^{n \times k}.
\end{align}

Here, the constraint \(X \in \{0,1\}^{n \times k}\) ensures that each vertex is assigned to exactly one of the \(k\) partitions. The multi-objective formulation allows trade-offs between the primary partitioning goal and the enforcement of balance, making the initial solution more robust and feasible for subsequent refinement stages.

\subsection{Solving models}\label{sect23}
A common approach to solving multi-objective optimization problems is to combine multiple objectives into a single objective by assigning appropriate weights to each goal, thereby optimizing the overall solution. The objective functions of the above two models can be expressed respectively as,
\begin{align}
\nonumber F_{C_{1}}(X)&= \lambda_{1}f(X) + (1-\lambda_{1})(\lambda_{2}g_{u}(X) + (1 - \lambda_{2})g_{w}(X)),\\
             &=\langle -C_{1} ,XX^{T} \rangle.\label{modle1}
\end{align}
where \( C_{1} = \lambda_{1}\overline{A} + (1-\lambda_{1})(\lambda_{2}G_{u} + (1 - \lambda_{2})G_{w} \)), $\lambda_{i}\in[0,1]$.
Then, (\ref{eg3.2}) can be represented as the following integer programming problem.
\begin{align} \label{eg3.4}
 \nonumber \mathop {\min } \ \ &F_{C_{1}}(X)\\
 s.t.\ \ & X\in\{0,1\}^{n\times k}.
\end{align}

We relax Equation~\eqref{eg3.4} into the following model:
\begin{align} \label{eg3.7}
 \nonumber \mathop {\min } \ \ & F_{C_{1}}(X)\\
 s.t.\ \ & diag(XX^\top) = \mathbf{1},\\
  \nonumber & X \in \mathbb{R}^{n \times k}.
\end{align}

For notational convenience, we define the feasible set
\[
\mathcal{S} := \left\{ X \in \mathbb{R}^{n \times k} \,\middle|\, \mathrm{diag}(XX^\top) = \mathbf{1} \right\},
\]
and rewrite the problem (\ref{eg3.7}) as:
\begin{align}\label{eg3.8}
\min_{X \in \mathbb{R}^{n \times k}} \quad & F_{C_{1}}(X) \\
\text{s.t.} \quad & X \in \mathcal{S}. \nonumber
\end{align}

We encode the constraint using the indicator function,
\[
g(X) = \delta_{\mathcal{S}}(X) =
\begin{cases}
0, & \text{if } \|X_{i,:}\|_2 = 1,\ \forall i, \\
+\infty, & \text{otherwise}.
\end{cases}
\]
Therefore, this problem is transformed into a composite optimization problem
\begin{equation} \label{comopt2}
\min_{X \in \mathbb{R}^{n \times k}} \ F_{C_{1}}(X) + g(X).
\end{equation}

Since \( g(X) \) is an indicator function over the set \( \mathcal{S} \), its proximal operator reduces to the projection onto \( \mathcal{S} \), which can be applied row-wise as:
\[
\text{Proj}_{\mathcal{S}}(X_{i,:}) =
\begin{cases}
\frac{X_{i,:}}{\|X_{i,:}\|_2}, & \text{if } \|X_{i,:}\|_2 \neq 0, \\
(1, 0, 0, \ldots, 0), & \text{if } \|X_{i,:}\|_2 = 0.
\end{cases}
\]
This projection ensures that each row \( X_{i,:} \) lies on the unit Euclidean sphere, thereby maintaining feasibility with respect to the constraint \( \mathrm{diag}(XX^T) = \mathbf{1} \) throughout the iterations.

We now verify that the objective function satisfies the K\L{} property, which is essential for the convergence analysis. According to~\cite{KL2010}, any proper lower semicontinuous \textit{semialgebraic} function satisfies the K\L{} property.
In our case, \( F_{C_1}(X) \) is a quadratic polynomial and hence semialgebraic. The constraint set \( \mathcal{S}\) is defined by polynomial equalities, so it is also semialgebraic. Consequently, the indicator function \( g(X) \) is semialgebraic, and so is the composite objective function \( F_{C_1}(X) +  g(X) \). Therefore, the full objective satisfies the K\L{} property.

Thus, we adopt \textbf{modAPG\_nc}~\cite{wang2024class} to solve the nonconvex constrained problem~\eqref{comopt2}.
The \textbf{modAPG\_nc} algorithm is specifically designed for nonsmooth and nonconvex composite optimization problems.
It incorporates an adaptive nonmonotone stepsize strategy, extrapolation, and a safeguard mechanism that resets the extrapolation parameter to zero at selected iterations.
Under the K\L{} property, theoretical results in~\cite{wang2024class} guarantee local convergence of the iterates, which applies directly to our setting.

We extend \textbf{modAPG\_nc} to the matrix setting \( \mathbb{R}^{n \times k} \), where the generated clusters correspond to stable points of the objective. The algorithm employs an adaptive stepsize strategy, which we further accelerate by providing a good initial stepsize estimate. Given an initial point \( X_0 \), the initial stepsize \( \alpha_0 \) is computed as:
\begin{align}\label{eg3.8}
\alpha_{0} = \frac{\|X_0 - X_1\|}{\|\nabla F_{C_1}(X_0) - \nabla F_{C_1}(X_1)\|},
\end{align}
where \( X_1 = \text{Proj}_{\mathcal{S}}(\nabla F_{C_1}(X_0)) \).
The remaining parameters follow the default settings in \cite{wang2024class}. The pseudocode of the extended \textbf{modAPG\_nc} algorithm \ref{modAPG} is provided below.

\begin{breakablealgorithm}
\caption{\textbf{modAPG\_nc}: Modified accelerated proximal gradient for nonconvex case}
\label{modAPG}
\small
\begin{algorithmic}[1]
\REQUIRE Initial point $X_0 = X_{-1} \in \mathbb{R}^{n\times \tilde{k}}$, parameters $0 < \mu_1 < \mu_0 < 1$, $0 < \delta_1 < \delta_2 < \frac{1 - \mu_0}{2 \alpha_1}$, extrapolation factors $\beta_k \in [0, 1]$, control parameter $\eta \in (0,1)$, sequence $E_k = \frac{1}{k^{1 + \tilde{p}}}$ with $\tilde{p} > 0$, $\epsilon=10^{-3}$.
\ENSURE Final output $X^*$
\STATE Set $c_0 = F_{C_{1}}(X_0)$, $q_0 = 1$, $k=0$. The step size $\alpha_0$ is calculated by (\ref{eg3.8})
\STATE Compute
\(error=\left\| \frac{ {X}_{k+1} -  {X}_k}{\alpha} + \left( \nabla F_{C_{1}}( {X}_{k+1}) - \nabla F_{C_{1}}( {X}_k) \right) \right\|_\infty\)
\WHILE {$error > \epsilon$}
 \STATE /* \textbf{Calculate the step size $\alpha_{k+1}$} */
    \IF{$2 \left[ F_{C_{1}}(X_k) - F_{C_{1}}(X_{k-1}) - \langle \nabla F_{C_{1}}(X_{k-1}), X_k - X_{k-1} \rangle \right] > \frac{\mu_0}{\alpha_k} \| X_k - X_{k-1} \|^2$}
        \STATE $\alpha_{k+1} \gets \mu_1 \cdot \frac{\| X_k - X_{k-1} \|^2}{2 \left[ F_{C_{1}}(X_k) - F_{C_{1}}(X_{k-1}) - \langle \nabla F_{C_{1}}(X_{k-1}), X_k - X_{k-1} \rangle \right]}$
    \ELSE
        \STATE $\alpha_{k+1} \gets \alpha_k + \min(1, \alpha_k) \cdot E_k$
    \ENDIF
    \STATE $y_{k+1} \gets X_k + \beta_k (X_k - X_{k-1})$, $z_{k+1} \gets \text{Proj}_{\mathcal{S}}(y_{k+1}-\alpha_{k+1}\nabla F_{C_{1}}(y_{k+1}))$
\STATE Compute:
\begin{equation*}
\begin{split}
\Phi_1 &\gets \| z_{k+1} - y_{k+1} \|^2 + \| z_{k+1} - x_k \|^2 \\
       &\quad - \left(1 + \frac{\sigma}{k^r} \right) \| y_{k+1} - X_k \|^2, \\
\Phi_2 &\gets \delta_1 \| z_{k+1} - X_k \|^2 \\
       &\quad - \delta_2 \left( \| z_{k+1} - y_{k+1} \|^2 + \| z_{k+1} - X_k \|^2 - \| y_{k+1} - X_k \|^2 \right).
\end{split}
\end{equation*}
     \STATE /* \textbf{Update iteration $X_{k+1}$} */
    \IF{ $\Phi_1 \geq 0$ and $F_{C_1}(z_{k+1}) \leq \min(F_{C_1}(X_k) + \Phi_2,\ c_k)$ }
        \STATE $X_{k+1} \gets z_{k+1}$
    \ELSE
        \STATE $X_{k+1} \gets \text{Proj}_{\mathcal{S}}(X_k-\alpha_{k+1}\nabla F_{C_{1}}(X_{k+1}))$
    \ENDIF
    \STATE Update $q_{k+1} \gets 1 + \eta q_k$, $c_{k+1} \gets \frac{ \eta q_k c_k + F_{C_1}(X_{k+1}) }{ q_{k+1} }$
    \STATE Update $error$, $E_{k}$, $k++$
\ENDWHILE
\end{algorithmic}
\end{breakablealgorithm}

At each iteration, the algorithm performs extrapolation to obtain a search point, followed by a projection step onto the feasible set $\mathcal{S}$ to satisfy the unit norm constraint.
The stepsize is adaptively adjusted using a nonmonotone scheme based on the observed objective decrease, and the extrapolation parameter $\beta_k$ is controlled such that it can be reset (i.e., $\beta_k = 0$) when the search direction is not reliable. Two acceptance conditions $\Phi_1$ and $\Phi_2$ are checked to ensure both descent and stability. When violated, the method performs a fallback proximal step using the current iterate.
We further incorporate a weighted average mechanism $c_k$ to control the nonmonotonicity of the update. The algorithm terminates once the scaled update and gradient change fall below a pre-specified threshold.
The time complexity per iteration of the algorithm is $O(n)$.

Moreover, by setting different parameters in Equation~\eqref{comopt2}, multiple $k$-dimensional information matrices for the vertices can be obtained by solving with Algorithm \textbf{modAPG\_nc}. Each of these matrices can then be used to construct a MST, from which a corresponding balanced partition can be derived.
\section{Initial partitioning via minimum spanning tree}\label{sect4}
This section introduces how to transform the vertex embeddings generated by Algorithm \textbf{modAPG\_nc} into a graph partitioning problem. Based on the size of the coarsened hypergraph, we design two different partitioning strategies.

For small-scale graphs, we first construct an adjacency graph by applying a distance threshold, then compute the MST using Prim's algorithm, and finally perform pruning and clustering on the tree to obtain a $k$-way partition.

For large-scale graphs, a subset of representative vertices is selected to construct the MST and perform pruning-based clustering. The remaining unselected vertices are then assigned to the nearest cluster based on their distance to the corresponding subtree cluster centers, resulting in a complete $k$-way partition.

In the multilevel partitioning framework, the original hypergraph $\mathcal{H} = (\mathcal{V}, \mathcal{E})$ is iteratively coarsened according to the condition specified on line 3 of Algorithm \ref{framework}.
 The choice of method for generating initial partitions is guided by the empirical size of the coarsened hypergraph, allowing a flexible trade-off between efficiency and quality.

\subsection{Method for generating feasible partitioning in small-scale datasets}
For small-scale datasets, we propose a method to generate feasible initial partitions, as illustrated in Figure~\ref{MSTer}.

Subfigure (a) depicts an incomplete graph constructed from the feature matrix; Subfigure (b) shows the MST generated using the classical Prim algorithm. The core idea of the Prim algorithm is to start from an arbitrary vertex \(v_0\), and iteratively connect the vertex closest to the current subtree. Specifically, the algorithm first connects \(v_0\) to its nearest vertex \(v_1\), forming the initial subtree \(T_1\); then connects \(T_1\) to its nearest vertex \(v_2\), forming \(T_2\); and continues this process until all vertices are included in the spanning tree.

Subfigure~(c) illustrates the partitioning of the MST into $p$ subtrees based on edge weight characteristics, where $p$ is typically greater than $k$. Finally, Subfigure~(d) shows how these clusters are merged to form the final $k$-way partition.

\begin{figure}
  \centering
  \includegraphics[width=12cm, height=9cm]{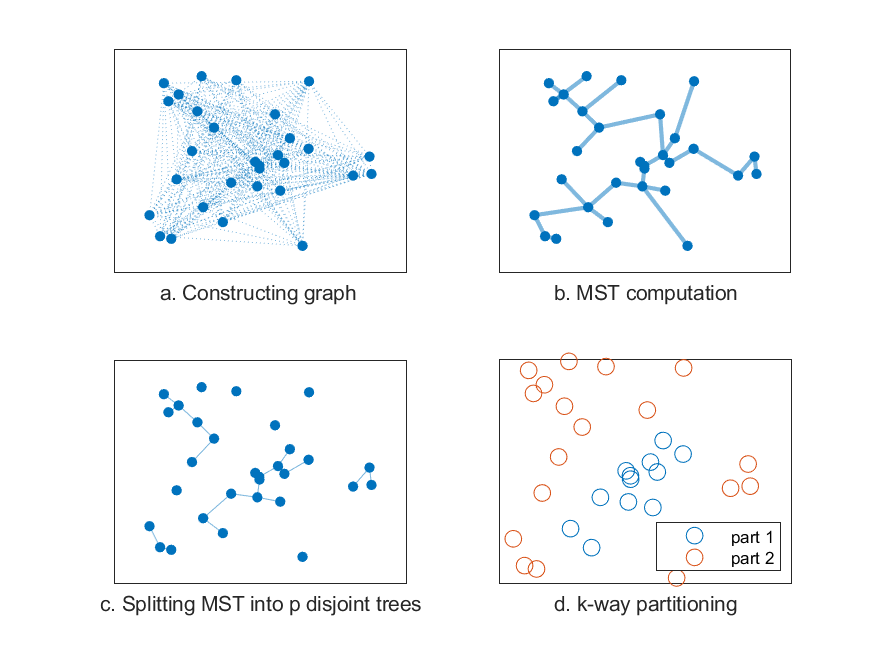}
  \caption{Feasible point generation method for small-scale datasets.}
  \label{MSTer}
\end{figure}

\subsubsection{Constructing a sparse graph and generating a minimum spanning tree}\label{smallMST1}

In this subsection, we introduce an enhanced MST algorithm built upon the classical Prim’s algorithm, which incorporates structural information derived from the feature space to improve performance. Specifically, instead of using a traditional graph adjacency matrix or an explicitly weighted graph, the input is a normalized feature matrix \( X \in \mathbb{R}^{n \times k} \), where each row \( X[i, :] \) represents the unit feature vector of the \( i \)-th sample. The similarity between samples is measured by the dot product of their corresponding feature vectors:
\[
s_{ij} = X[i, :] \cdot X[j, :]^\top,
\]
For any pair \( (i, j) \), if the similarity \( s_{ij} \) exceeds a predefined threshold \( \tau \), an edge is established between nodes \( i \) and \( j \). The weight of this edge is defined as
\[
w_{ij} = 1 - s_{ij},
\]
which is mathematically equivalent to half of the squared Euclidean distance between the normalized feature vectors. This thresholding approach effectively filters out irrelevant edges, thus avoiding the computational redundancy and high complexity caused by constructing a complete graph, while ensuring the resulting graph is sparse and representative. In summary, although the edge selection is based on similarity, the algorithm fundamentally constructs a MST in Euclidean space.

Based on the feature matrix and a predefined threshold, a non-complete graph $G$ can be constructed. When generating the MST on this graph, the following theorem holds:

\begin{theorem}
Let $G = (V, E, w)$ be a connected weighted graph. If the $k$ heaviest edges are removed from $G$, resulting in a new graph $G' = (V, E', w)$ that remains connected, then the MST $T'$ of $G'$ is also one of the MSTs of the original graph $G$.
\end{theorem}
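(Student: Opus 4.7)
The plan is to reduce the claim to the Cycle Property (Theorem~\ref{Cycle Property}) by deleting the removed edges one at a time, in decreasing order of weight. Enumerate the $k$ removed edges as $e_1, e_2, \dots, e_k$ with $w(e_1) \ge w(e_2) \ge \dots \ge w(e_k)$, and define the intermediate graphs $G_0 := G$ and $G_i := G_{i-1} \setminus \{e_i\}$ for $1 \le i \le k$, so that $G_k = G'$. I would then prove by induction on $i$ that every MST of $G_i$ is also an MST of $G$.

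The base case $i=0$ is trivial. For the inductive step, assume the statement holds for $G_{i-1}$; I need to show that any MST of $G_i$ is also an MST of $G_{i-1}$, after which the result follows by composing with the inductive hypothesis. The key observation is that $G_i \supseteq G'$, and since $G'$ is connected by hypothesis, every supergraph is connected as well, so $G_i$ itself is connected. Because $G_i = G_{i-1} \setminus \{e_i\}$ is obtained from the connected graph $G_{i-1}$ by deleting the single edge $e_i$ without destroying connectivity, $e_i$ cannot be a bridge in $G_{i-1}$, hence there exists a cycle $C \subseteq G_{i-1}$ containing $e_i$.

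Next I would verify that $e_i$ is a maximum-weight edge of the cycle $C$. By the ordering of the $e_j$'s, the edges $e_1, \dots, e_{i-1}$ that are strictly (or weakly) heavier than $e_i$ have already been removed and therefore do not appear in $G_{i-1}$. Every remaining edge of $G_{i-1}$ is either one of $e_i, e_{i+1}, \dots, e_k$ (weight $\le w(e_i)$) or an edge of $E'$, whose weight is at most $w(e_k) \le w(e_i)$ since the $e_j$ were chosen as the $k$ heaviest edges of $G$. Consequently $e_i$ attains the maximum weight along $C$, and the Cycle Property applied to $G_{i-1}$ and the cycle $C$ yields that the MST of $G_{i-1} \setminus \{e_i\} = G_i$ is also an MST of $G_{i-1}$. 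Chaining this with the inductive hypothesis completes the induction, and taking $i=k$ gives the claim.

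The routine parts are the reindexing and the connectivity argument. The only delicate point I anticipate is handling ties: if several removed edges share the maximum weight, the role of ``the heaviest edge in $C$'' must be interpreted as ``a heaviest edge in $C$'', which is how the Cycle Property is standardly read and is exactly what the inductive ordering provides. I would state this caveat explicitly so the reader sees that the application of Theorem~\ref{Cycle Property} is valid regardless of ties among edge weights.
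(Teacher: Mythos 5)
Your proof is correct and follows essentially the same route as the paper's: induction on the removed edges taken in decreasing weight order, with the Cycle Property applied at each step. You additionally justify two points the paper's proof leaves implicit — that each intermediate graph is connected because it contains the connected $G'$ as a subgraph, and that ties among edge weights do not break the application of the Cycle Property — which strengthens rather than changes the argument.
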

\begin{proof}
The theorem is proved using mathematical induction.

Base case ($k=1$):
Assume the heaviest edge $a_1$ is removed from $G$, resulting in a new graph $G_1 = G \setminus a_1$, and suppose $G_1$ remains connected.
Since $a_1$ is the edge with the maximum weight in $G$, and removing it does not disconnect the graph, it must belong to some cycle $C$ in $G$ (otherwise, its removal would disconnect the graph).
According to the \textit{Cycle Property} (Theorem~\ref{Cycle Property}),
$a_1$ is the heaviest edge in the cycle $C$,
thus it does not belong to any MST of $G$.
Therefore, the MST of $G_1$ is also a MST of $G$.

Inductive hypothesis:
Assume that the statement holds for $k = t$. That is, after removing the $t$ heaviest edges $\{a_1, \dots, a_t\}$ from $G$, and assuming the resulting graph $G_t$ remains connected, then any MST of $G_t$ is also a MST of $G$.

Inductive step ($k = t+1$):
Now consider removing the $(t+1)$-th heaviest edge $e_{t+1}$ from $G_t$, resulting in $G_{t+1} = G_t \setminus a_{t+1}$, and suppose $G_{t+1}$ is still connected.
Since $e_{t+1}$ is the heaviest edge in $G_t$ and its removal does not disconnect the graph, it must lie on some cycle in $G_t$.

Again, by the \textit{Cycle Property} (Theorem~\ref{Cycle Property}),
$a_{t+1}$ is the heaviest edge in that cycle, and thus cannot appear in any MST of $G_t$.
Therefore, the MST of $G_{t+1}$ is the same as that of $G_t$.
From the inductive hypothesis, the MST of $G_t$ is also a MST of $G$.
Hence, the MST of $G_{t+1}$ is also a MST of $G$.

By the principle of mathematical induction, the conclusion holds for all $k \geq 1$.
\end{proof}

\begin{theorem}
Let $G = (V, E, w)$ be a connected, weighted, undirected graph, and let $T$ be a MST of $G$. Suppose the $k$ heaviest edges in $G$ are removed, resulting in a new graph $G' = (V, E')$, which becomes disconnected and consists of $m$ connected components, where $m \leq k + 1$. Let $F = (V, E')$ denote the resulting forest.

Then:
\begin{itemize}
\item[1)] Each connected component $F_i$ of $F$ is entirely contained within some subtree $T_i$ of the MST $T$.
\item[2)] If we select the $m - 1$ lightest edges among the removed $k$ edges that connect different components of $F$, and add them to $F$ to form a new graph $T'$, then $T'$ is also a MST of $G$.
\end{itemize}
\end{theorem}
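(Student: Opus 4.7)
The plan is to reduce both parts to the Cycle Property (Theorem~\ref{Cycle Property}) combined with the greedy (Kruskal) characterization of a MST. Let $R$ denote the set of $k$ removed (heaviest) edges, so $F = G \setminus R$, and set $T_0 := T \setminus R$. The key auxiliary fact is that $F$ and $T_0$ induce the same partition of $V$ into $m$ connected components, and from this both claims follow with short arguments.

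For part (1) it suffices to show that two vertices joined by a path in $F$ are also joined by a path in $T_0$. Take an $F$-path $u = x_0, x_1, \dots, x_\ell = v$ and consider each edge $(x_i, x_{i+1})$. If it lies in $T$ it already lies in $T_0$; otherwise examine its fundamental $T$-cycle. By the Cycle Property every edge on the associated $T$-path has weight at most $w(x_i, x_{i+1})$, and since $(x_i, x_{i+1}) \notin R$ while $R$ consists of the $k$ heaviest edges of $G$, every such $T$-edge is strictly lighter than every edge in $R$ and therefore lies in $T_0$. Concatenating these $T_0$-paths yields a $T_0$-path from $u$ to $v$, so the components of $F$ and $T_0$ coincide. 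In particular, exactly $m-1$ edges of $R$ lie in $T$, and their removal from $T$ produces the $m$ subtrees $T_1, \dots, T_m$ with $F_i \subseteq T_i$ as asserted.

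For part (2), pass to the quotient graph $Q$ whose vertices are the components $F_1, \dots, F_m$ and whose edges are the bridging members of $R$, each keeping its $G$-weight. By part (1) the $m-1$ edges of $T \cap R$ form a spanning tree of $Q$. For any $e' \in R \setminus T$ that bridges two $F$-components, the fundamental $T$-cycle of $e'$ must contain some $e^* \in T \cap R$ (otherwise the endpoints of $e'$ would already lie in the same $T_0$-component), and the Cycle Property gives $w(e') \ge w(e^*)$. A standard Kruskal-style exchange then shows that the greedy selection of $m-1$ bridging edges, processed in increasing weight order and rejecting any that close a cycle in $Q$, produces a spanning tree of $Q$ with the same total weight as $T \cap R$. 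Joining this spanning tree to $F$ yields a spanning tree of $G$ of weight $w(F) + w(T \cap R) = w(T)$, which must therefore be a MST of $G$.

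The main obstacle is interpreting the phrase ``the $m-1$ lightest bridging edges'' correctly: a naive reading can select several cheap parallel edges between the same pair of $F$-components and thereby fail to form a spanning tree of $Q$, so the statement has to be read in the Kruskal sense above. Ties between equal-weight edges need to be resolved by the usual tie-breaking convention, after which the exchange argument closes cleanly and identifies the greedy choice with (an equivalent alternative to) $T \cap R$.
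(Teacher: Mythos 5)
Your proof is correct in substance but follows a genuinely different and, in places, more careful route than the paper's. For part (1) the paper argues by contradiction: it supposes some $F$-component meets two subtrees of $T \setminus E_H$ and derives a contradiction with the minimality of $T$ by a single edge exchange along the alternate path; it only concludes that $T \setminus E_H$ has \emph{at most} $k+1$ pieces. You instead argue directly that $F$-connectivity implies connectivity in $T_0 = T \setminus R$ by replacing each non-tree edge of an $F$-path with its fundamental $T$-path and checking that none of those tree edges can lie in $R$; since $T_0 \subseteq F$ trivially, the two component structures coincide, and you extract the fact $|T \cap R| = m-1$, which the paper never states but which is exactly what part (2) needs. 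For part (2) the paper picks the $m-1$ lightest bridging edges, asserts without justification that they ``reconnect the $m$ components'' and are ``the lightest edges crossing each cut,'' and invokes the Cut Property; you correctly observe that the literal selection can choose parallel edges between the same pair of components and fail to span, and your quotient-graph/Kruskal reformulation repairs this real gap, with the weight accounting $w(F) + w(T \cap R) = w(T)$ giving a cleaner certificate of optimality than the paper's appeal to the Cut Property. Two small caveats: the assertion that the relevant tree edges are \emph{strictly} lighter than every edge of $R$ requires the distinct-weight or tie-breaking convention you flag at the end (with ties, an edge of $R$ could share the weight of an edge outside $R$), and the identity $w(F) = w(T \setminus R)$ silently uses the theorem's standing assumption that $G \setminus R$ is actually a forest, so that $F$ and $T \setminus R$, having the same components and the same edge count $|V|-m$, coincide --- the paper relies on the same unstated assumption when it counts the edges of $F$.
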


\begin{proof}
\textbf{1)} Let $E_H \subseteq E$ denote the set of the $k$ heaviest edges in $G$, and define $E' = E \setminus E_H$. Let $F = (V, E')$ be the resulting forest with connected components $F_1, \dots, F_m$, where $m \leq k + 1$.

Let $T = (V, E_T)$ be a MST of $G$. Consider the subgraph $T' = (V, E_T')$ where $E_T' = E_T \setminus (E_T \cap E_H)$. That is, we remove from the MST all edges that also belong to $E_H$.

Since $T$ is a tree with $|V| - 1$ edges, removing $r \leq k$ edges from it yields at most $r + 1 \leq k + 1$ connected components. Thus, $T'$ is a forest consisting of subtrees $T_1, \dots, T_{m'}$ with $m' \leq k + 1$.

We claim that every connected component $F_i$ of $F$ is entirely contained in some $T_j$. Suppose not. Then there exist vertices $u \in V(T_j)$ and $v \in V(T_\ell)$ ($j \ne \ell$) such that $u, v \in V(F_i)$ and are connected in $F$ via a path $P_{u,v} \subseteq E'$.

However, since $T$ is a MST, the unique path between $u$ and $v$ in $T$ must pass through an edge in $E_T \cap E_H$. The existence of an alternate path in $F$ not using any edge from $E_H$ contradicts the minimality of $T$, since one could construct a spanning tree of lower or equal weight by replacing the heavy edge in $T$ with a lighter one from $P_{u,v}$. This contradicts the optimality of $T$.

Therefore, each $F_i$ must lie entirely within a subtree $T_j$.

\vspace{1em}
\textbf{2)} Let $\mathcal{C}$ be the set of edges in $E_H$ that connect different components of $F$. Select the $m - 1$ lightest edges $\{a_1, \dots, a_{m-1}\} \subseteq \mathcal{C}$, and define:
\[
T' = F \cup \{a_1, \dots, a_{m-1}\}.
\]
Since $F$ contains $|V| - m$ edges and we add $m - 1$ edges, $T'$ contains $|V| - 1$ edges. Moreover, these $m - 1$ edges reconnect the $m$ components of $F$, so $T'$ is connected and acyclic — i.e., a spanning tree.

To prove that $T'$ is a MST, note that the selected edges $\{a_1, \dots, a_{m-1}\}$ are the lightest edges crossing each cut between components. By the \emph{Cut Property} of MSTs, these edges must appear in every MST of $G$. Hence, $T'$ contains only edges from $F$ and cut-minimum edges, so no spanning tree can have smaller total weight.

Therefore, $T'$ is also a MST of $G$.
\end{proof}

\subsubsection{Generating $k$-way partitioning via pruned clustering}
Next, we introduce the partitioning process based on the MST, as illustrated in Subfigures~(c) and~(d) of Figure~\ref{MSTer}. The corresponding pseudocode is provided in Algorithm~\ref{Alg:MSTer}. This method performs clustering by pruning the MST.

First, the edges of the MST are sorted in descending order by weight, and the top $p - 1$ heaviest edges are removed. This operation divides the original MST into $p$ connected clusters $\mathcal{C} = \{C_1, \dots, C_p\}$.
Then, the total vertex weight $\mathcal{S}$ of each cluster is computed, and the $k$ clusters with the largest weights are selected as the initial partition set $\mathcal{P}$.
The remaining $p-k$ clusters are assigned one by one to the closest partition based on feature center distance. If the assignment satisfies the resource constraint after merging, it is accepted directly; otherwise, the cluster is assigned to the partition with the most available resources to promote load balancing.

This process prioritizes structural compactness while also considering resource balance.
It is worth noting that some partitions may not fully satisfy strict resource constraints at this stage. Feasibility correction and further optimization will be addressed in Section \ref{sect5}.

\begin{breakablealgorithm}
\caption{Balanced partitioning via MST-based edge pruning}
\label{Alg:MSTer}
\begin{algorithmic}[1]
\REQUIRE MST $T$, Target Clusters $p$,
   Vertex Weights $B \in \mathbb{R}^n$,
  Resource Limits $U \in \mathbb{R}^k$
\ENSURE  Partitions $\mathbf{\Pi} = \{\Pi_1,...,\Pi_k\}$

\STATE \textbf{/* Initial clustering via maximum-weight edge pruning on the MST */}
\STATE Sort edge set $T$ by edge weight in descending order \COMMENT{Sort edges in descending order to remove the heaviest ones}
\STATE Remove the top $p-1$ edges from $T$ to obtain $p$ connected components (clusters) $\mathcal{C} = \{C_1, \dots, C_p\}$

\STATE Compute cluster weights: $\mathcal{S} \gets \left\{ \sum_{j \in C_i} B_j \right\}_{i=1}^{p}$ \COMMENT{Total weight of each cluster}
\STATE $\mathcal{P} \gets \text{Select\_Top}_k(\mathcal{C}, \mathcal{S})$ \COMMENT{Select the $k$ heaviest clusters}
\STATE \textbf{/* Merge remaining clusters into $k$ approximately balanced partitions */}
\FOR{$C_i \in \mathcal{C} \setminus \mathcal{P}$}
    \STATE $j^* \gets \underset{j}{\text{argmin}} \|\mu(C_i) - \mu(\mathcal{P}_j)\|$ \COMMENT{Find nearest partition center}
    \IF{$\mathcal{S}(\mathcal{P}_{j^*}) + \mathcal{S}(C_i) \leq U_{j}$}
        \STATE $\mathcal{P}_{j^*} \gets \mathcal{P}_{j^*} \cup C_i$ \COMMENT{Merge into nearest feasible partition}
    \ELSE
        \STATE $\mathcal{P}_{j'} \gets \underset{j}{\text{argmin}}\:\mathcal{S}(\mathcal{P}_j)$ \COMMENT{Merge into partition with minimal load}
    \ENDIF
\ENDFOR
\STATE $\Pi \gets \mathcal{P}$
\end{algorithmic}
\end{breakablealgorithm}

The parameter $p$ in the algorithm controls the number of initial clusters generated during pruning, which directly affects the clustering quality and the granularity of the final partitioning.
If $p$ is set too large, excessive edge removals may occur, resulting in a large number of clusters. This not only increases the complexity of subsequent merging but may also degrade the partitioning quality.
Conversely, if $p$ is too small, the clustering becomes overly coarse and fails to capture the structural characteristics of the graph, thereby affecting the overall partitioning performance.

In this work, we adopt two typical values for $p$, namely $\sqrt{\frac{n}{2}}$ and $\frac{n}{5k}$, both of which ensure $p > k$, i.e., the number of clusters exceeds the target number of partitions. These values are adaptive to the graph size and vary accordingly with the number of vertices.

\subsection{Method for generating feasible partitions in large-scale datasets}
For large-scale datasets, directly constructing a complete MST incurs high computational complexity. To address this, we first select the top 20\% of nodes with the highest weights as representative nodes. A MST is then constructed among these representative nodes, and an initial partition is generated according to Algorithm~\ref{Alg:MSTer}. The remaining non-representative nodes are subsequently assigned to the partition of their nearest representative node using a clustering strategy, as illustrated in Figure~\ref{RNDGD}. The corresponding implementation is shown in Algorithm~\ref{ALG:RNDGD}.

\begin{figure}
  \centering
  \includegraphics[width=12cm, height=9cm]{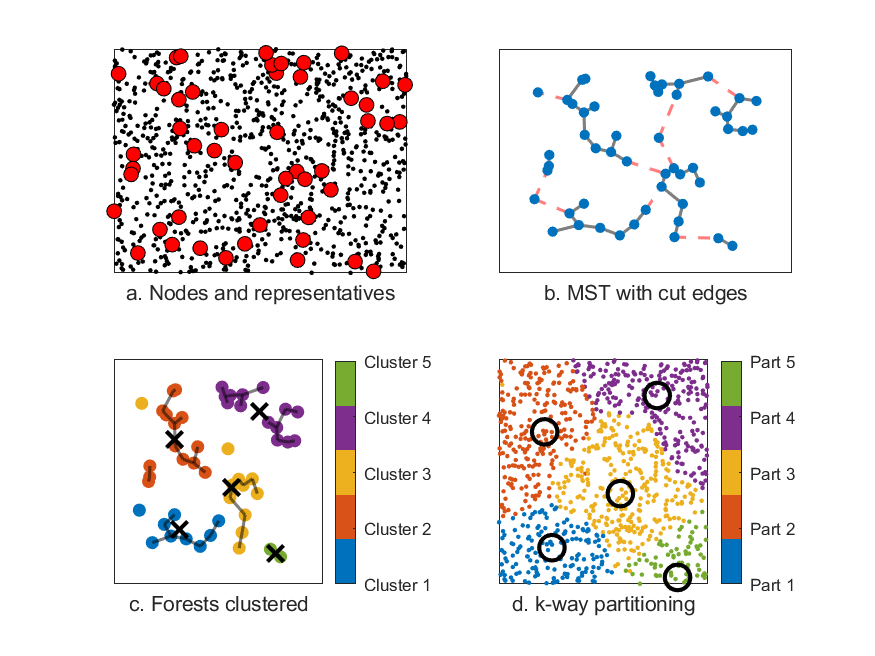}
  \caption{Method for generating feasible points in large-scale datasets.}
  \label{RNDGD}
\end{figure}

\begin{breakablealgorithm}
\caption{Representative nodes drive global division}
\label{ALG:RNDGD}
\begin{algorithmic}[1]
\REQUIRE Feature matrix $X \in \mathbb{R}^{n \times k}$, Vertex weights $B\in \mathbb{R}^n$, Partitions $k$, Target Clusters $p$, Resource Limits $U \in \mathbb{R}^k$
\ENSURE Balanced partitions $\Pi=\{\Pi_1,...,\Pi_p\}$

\STATE /* \textbf{Step 1: Minimum spanning tree over representative nodes}*/
\STATE Select $\mathcal{N} \gets \text{Top-} \lceil 0.2n \rceil \text{ vertices by } B$ \COMMENT{Select representative nodes}
\STATE Initialize $\text{visited}, \text{minDist}, \text{parent}, T$

\FOR{$k \gets 1$ to $m$}
    \STATE $u \gets  \mathrm{argmin}_{i \notin \text{visited}} \text{minDist}[i]$ \COMMENT{Greedy selection}
    \STATE $\text{visited}[u] \gets \text{True}$
    \STATE Add edge to $T$ if $\text{parent}[u] \neq -1$, with weight $w = 1 - \langle X_{\mathcal{N}[u]}, X_{\mathcal{N}[\text{parent}[u]]} \rangle$

\FOR{each unvisited node $v$}
        \STATE $s \gets X_{\mathcal{N}[u]} \cdot X_{\mathcal{N}[v]}^\top$
        \IF{$s > 0.2$ and $1 - s < minDist[v]$}
            \STATE $minDist[v] \gets 1 - s$, $parent[v] \gets u$
        \ENDIF
    \ENDFOR
\ENDFOR
\STATE /* \textbf{Step 2: Prune into $p$ clusters and merge them into$k$ balanced partitions} */
\STATE Sort edge set $T$ by edge weight in descending order 
\STATE Remove the top $p-1$ edges from $T$ to obtain $p$ connected components (clusters) $\mathcal{C} = \{C_1, \dots, C_p\}$
\STATE Compute cluster weights: $\mathcal{S} \gets \left\{ \sum_{j \in C_i} B_j \right\}_{i=1}^{p}$ \COMMENT{Cluster Weight Calculation}
\STATE $U_{adj} \gets (1+\varepsilon)\frac{\sum \mathcal{S}}{k} \cdot \mathbf{1}_k$ \COMMENT{Adaptive resource adjustment}
\FOR{each cluster $C_i \in \mathcal{C}$}
    \STATE $j^* \gets \underset{j}{\text{argmin}} \| {X}_v - \mu_j \| \text{ s.t. } \mathcal{S}_j + B(v) \leq U_{adj}$
    \STATE Assign to nearest feasible $\mathcal{P}_j^*$, else assign to least loaded
    \STATE Update $\mathcal{S}$
\ENDFOR

\STATE /* \textbf{Step 3: Assignment of remaining nodes for final partitioning} */
\FOR{each $v \in \mathcal{V \setminus \mathcal{N}}$}
\STATE $j^* \gets \underset{j}{\text{argmin}} \| {X}_v - \mu_j \| \text{ s.t. } \mathcal{S}_j + B(v) \leq U_j$
    \STATE Assign $v$ to nearest feasible $\mathcal{P}_{j^{*}}$, else to least loaded
    \STATE Update centroid $\mu_j$ and weight $\mathcal{S}_j$
\ENDFOR
\STATE $\Pi \gets \mathcal{P}$
\end{algorithmic}
\end{breakablealgorithm}

The algorithm first selects appropriate representative nodes to construct a MST, which is then pruned to form $p$ initial clusters. Subsequently, based on feature center distances and an improved resource constraint strategy, these initial clusters are merged into $k$ approximately balanced partitions. The remaining unassigned vertices are allocated to the nearest feasible cluster according to their feature vectors. The final output is a partitioning scheme that is both structurally compact and resource balanced. This method effectively avoids the high computational cost of directly constructing an MST over large-scale vertex sets by integrating clustering techniques to achieve efficient partitioning.

\section{Refinement strategies for initial partitions}\label{sect5}

This section presents strategies for optimizing and adjusting existing partitions.
For partitions that satisfy the constraints, we further improve the partition quality by introducing a bipartition refinement algorithm based on the MST.
For infeasible partitions, we propose a greedy move-and-swap strategy to restore feasibility.

\subsection{Modeling multi-objective improvements}
Similar to the modeling and solution approach for the initial partitioning, we consider three objectives: the minimum cut objective for graph partitioning, the equilibrium constraint, and the extraction of initial partition information.
For the first two objectives, the objective functions \( f(X) \) and \( g_w(X) \) remain the same as defined in Subsection~\ref{sect31}.
For the third objective, we first construct a complete $k$-partite graph based on the initial partition. And optimizing the objective \( g_{k}(X) = -\frac{1}{2}\langle K_{k}, XX^\top \rangle \) with \( K_{w} \) denoting the Laplacian matrix of the complete $k$-partite graph. We formulate a multi-objective 0-1 integer programming model as follows:
\begin{align}\label{eg3.3}
 \nonumber \min \quad & f(X),\: g_w(X),\: g_k(X) \\
 \text{s.t.} \quad & X \in \{0,1\}^{n \times k}.
\end{align}

In the above model, $X$ represents the partition indicator matrix, where each column corresponds to one of the $k$ partitions. Leveraging the orthogonality of $X$, we note that exchanging vertices between any two blocks does not affect the remaining blocks. Based on this observation, we perform $\lfloor \frac{k}{2} \rfloor$ pairwise optimizations, where each pair of partitions is optimized independently to improve the overall partition quality.

This formulation also reduces the $k$-dimensional assignment information of each vertex to a 2-dimensional representation for each pair, allowing us to solve multiple smaller subproblems. For each partition pair, we construct a multi-objective model:
\begin{align} \label{modle2}
\nonumber \min \quad & f(X), \: f_w(X), \: g_k(X) \\
   \text{s.t.} \quad & X \in \{0,1\}^{\bar{n} \times 2},
\end{align}
where $\bar{n}$ denotes the number of vertices involved in the current pair.

Similar to model~\eqref{modle1}, the multi-objective optimization is reformulated as a single-objective problem via a linear weighted sum with weights \(\xi_1, \xi_2 \in [0,1]\), resulting in the following formulation:
\begin{align}
\nonumber
F_{C_2}(X) &= \xi_1 f(X) + (1 - \xi_1) \bigl( \xi_2 g_w(X) + (1 - \xi_2) g_k(X) \bigr) \\
&= \langle -C_2, XX^\top \rangle,
\end{align}
where
\[
C_2 = \xi_1 \overline{A} + (1 - \xi_1) \left( \xi_2 G_w + (1 - \xi_2) K_w \right).
\]

Here, \(\xi_1\) balances the influence of the original graph structure and the initial partition, while \(\xi_2\) controls the trade-off between global structure (modeled by the complete graph) and local refinement (modeled by the complete bipartite graph).

Relaxing the binary constraint in model~\eqref{modle2} yields the following continuous optimization problem,
\begin{align} \label{modle2_relaxed}
 \min \quad & F_{C_2}(X) \\
 \text{s.t.} \quad & \operatorname{diag}(XX^\top) = \mathbf{1}, \nonumber \\
 & X \in \mathbb{R}^{\bar{n} \times 2}. \nonumber
\end{align}

Therefore, this problem is also transformed into a composite optimization problem, which is solved using the \textbf{modAPG\_nc} algorithm,
\begin{equation} \label{comopt3}
\min_{X \in \mathbb{R}^{\bar{n} \times 2}} \ F_{C_{2}}(X) + g(X).
\end{equation}

\subsection{Bipartition-based iterative optimization}

The balanced bipartition algorithm combining MST and clustering is based on constructing two cluster centers from the MST and partitioning all vertices into two groups according to these centers. Specifically, the algorithm first selects a subset of nodes with the largest weights as key nodes from the input data and builds a MST based on these nodes. Then, the top 20\% heaviest edges in the MST are considered as candidate cut edges. For each candidate edge, the graph is split into two subclusters, and the mean vector of each subcluster is computed as its cluster center. Each data point is assigned a partition label (1 or -1) based on its proximity to the two centers. Different cuts correspond to different cluster centers, thus producing multiple partitioning results.

During the evaluation phase, the algorithm checks whether each partition satisfies the given balance constraints (i.e., the total weights of the two partitions do not exceed \(U_1\) and \(U_2\), respectively) and computes the objective value \(y^\top L y\) for feasible partitions. Finally, the algorithm selects the partition with the minimum objective value that satisfies the constraints among all candidate cuts and different parameter settings \(k\) as the final result. The pseudocode of the algorithm is shown in Algorithm~\ref{alg:mst_partition}.
\begin{breakablealgorithm}
\caption{Optimization-driven graph partitioning based on MST decomposition }
\label{alg:mst_partition}
\begin{algorithmic}[1]
\REQUIRE
Feature matrix $X \in \mathbb{R}^{\bar{n} \times 2}$,
Vertex weights $B \in \mathbb{R}^{\bar{n}}$,
Resource Limits $U = (U_1,U_2)^\top$,
Graph Laplacian matrix $L \in \mathbb{R}^{\bar{n} \times \bar{n}}$, where $\bar{n}\leq n$
\ENSURE 2-way partitioning $\Pi$
\STATE Initialize $\mathbf{y} \leftarrow \mathbf{0}$, $\text{min\_obj} \leftarrow \infty$
\STATE Select $\mathcal{P}_{sel} \gets \text{Top-}\lceil 0.05n \rceil \text{ vertices by } B$ \COMMENT{Selection of key nodes}

\STATE /* \textbf{Build Minimum Spanning Tree} */
\STATE $T \leftarrow \text{PrimMST}(X, \mathcal{P}_{sel})$ \COMMENT{Using Euclidean distances}
\STATE $\mathcal{E}_{\text{cut}} \leftarrow \text{top}_m(\text{edges}(T))$ \COMMENT{Top $m=\lfloor 0.2k \rfloor$ heaviest edges}
\FOR{$\text{edge} \in \mathcal{E}_{\text{cut}}$}
    \STATE $(\mathcal{C}_1, \mathcal{C}_2) \leftarrow \text{split}(T, \text{edge})$ \COMMENT{Partition graph by removing edge}
    \STATE $\mathbf{c}_1 \leftarrow \text{mean}(X_{\mathcal{C}_1})$, $\mathbf{c}_2 \leftarrow \text{mean}(X_{\mathcal{C}_2})$ \COMMENT{Compute cluster centers}
    \STATE $\mathbf{y}_{\text{temp}} \leftarrow \text{sign}(\|X-\mathbf{c}_1\|^2 - \|X-\mathbf{c}_2\|^2)$ \COMMENT{Assign partition labels}
    \STATE /* \textbf{Check constraints} */
    \IF{$\mathbf{y}_{\text{temp}}^\top B \leq U_1$ \AND $-\mathbf{y}_{\text{temp}}^\top B \leq U_2$}
        \STATE $\text{obj} \leftarrow \frac{1}{4}\mathbf{y}_{\text{temp}}^\top L \mathbf{y}_{\text{temp}}$
        \IF{$\text{obj} < \text{min\_obj}$}
            \STATE $\text{min\_obj} \leftarrow \text{obj}$, $\mathbf{y} \leftarrow \mathbf{y}_{\text{temp}}$
        \ENDIF
    \ENDIF
\ENDFOR
\STATE Convert $\mathbf{y}$ into vertex partition $\Pi$ according to partition labels

\end{algorithmic}
\end{breakablealgorithm}

\subsection{Optimization strategies for existing partitions}\label{adjust1}

The following optimization algorithm is designed for a general $k$-way partitioning problem. It is based on iteratively reorganizing sub-partitions to improve the objective function, leveraging Algorithm~\ref{alg:mst_partition}. The detailed procedure is as follows:

\begin{enumerate}
    \item \textbf{Pairing sub-partitions.}
    Given an initial $k$-way partition $\{\Pi_1, \Pi_2, \ldots, \Pi_k\}$,
    the $k$ sub-partitions are sorted based on their connectivity strength and paired accordingly;
    this results in $\lceil k/2 \rceil$ pairs, with the last sub-partition left alone if $k$ is odd.

    \item \textbf{Parallel optimization of each pair.}
    For each pair $(\Pi_i, \Pi_j)$, the \textbf{modAPG\_nc} algorithm is applied to extract 2-dimensional vertex features from the objective model (\ref{modle2});
    then, Algorithm~\ref{alg:mst_partition} is used to repartition the combined sub-partition $\Pi_i \cup \Pi_j$ into $\{\Pi_i', \Pi_j'\}$.

    \item \textbf{Acceptance criterion and global merging.}
    The objective values of the new sub-partitions $(\Pi_i', \Pi_j')$ are compared with those of the original pair $(\Pi_i, \Pi_j)$;
    if the objective value decreases, the new partition is accepted, otherwise the original partition is retained;
    all sub-partitions are then merged to form the optimized $k$-way partition.
\end{enumerate}

For example, given an initial partition $\{\Pi_1, \Pi_2, \Pi_3\}$, suppose the pairs are $(\Pi_1, \Pi_2)$ and $(\Pi_3)$;
optimizing $\Pi_1 \cup \Pi_2$ yields $\{\Pi_1', \Pi_2'\}$;
merging results in $\{\Pi_1', \Pi_2', \Pi_3\}$;
this process iterates until the objective value no longer improves.

\section{Numerical experiments}\label{sect6}

To validate the effectiveness of our algorithms, we conduct numerical experiments using the ISPD98 VLSI circuit benchmark set with actual weights \cite{ISPD98} and Titan23 benchmarks with unit weights \cite{Titan23}. We compare our methods with two state-of-the-art hypergraph partitioners, KaHyPar \cite{kahypar2023high}, Mt-KaHyPar \cite{mtkahypar}, hMetis \cite{khmetis1999} and K-SpecPart \cite{kspecpart2022}. Our implementation is developed in C++ and compiled using g++ 13.3.0. All experiments are conducted on a server equipped with an Intel(R) Xeon(R) Gold 6248R CPU @ 3.00GHz, running the Ubuntu 24.04.2 LTS operating system.

\subsection{Parameters setting.}
In the partitioner hMetis, we denote the balance factor by UBfactor. To maintain consistency in the balance constraint, we enforce the following,
\begin{align*}
(1+\varepsilon)\frac{\sum_{v \in \mathcal{V}} B(v)}{k}=(\frac{50+\text{UBfactor}}{100})^{\log_{2}k}\sum_{v \in \mathcal{V}} B(v) 
\Longrightarrow
 \varepsilon = (\frac{50+\text{UBfactor}}{100})^{\log_{2}k}*k-1.
\end{align*}

Let UBfactor = 2 with hMetis, the balance factors for 2-way, 3-way, and 4-way partitions are set to $\varepsilon = 0.04$, $0.06$, and $0.08$, respectively for our algorithm. For partitions with more than four parts, we set $\varepsilon = 0.02$.
 As KaHyPar adheres to the constraints in this paper, identical balance factors are enforced across all $k$-way partitions.
Considering the influence of random seeds on KaHyPar and Mt-KaHyPar performance, we run it five times with seeds -1, 5, 10, 15, and 20, and report the average cutsize and CPU time. hMetis is run ten times to select the best partitioning result. For K-SpecPart, with the ub\_factor = 2, the results are derived from the published partition files.
All other parameters for partitioners are set to their default values\textsuperscript{1}.

\footnotetext[1]{For KaHyPar, the configuration file used is km1\_kKaHyPar\_sea20.ini. For Mt-KaHyPar, preset type=highest\_{quality}, t=48 and objective=km1.
For hMetis, the parameters are UBfactor = 2, Nruns = 10, CType=1, RType=1, Vcycle=1, Reconst=0 and dbglvl=0. For K-SpecPart, the partition files can be found at https://github.com/TILOS-AI-Institute/HypergraphPartitioning/tree/
main/K\_specpart\_solutions.
}

In our algorithm, when the parameters are fixed, the results remain consistent. Therefore, the algorithm only needs to be executed once for comparison purposes.
This study involves several parameter choices, including the clustering size during the coarsening phase, the weighting coefficients $\lambda_{i}$ and $\xi_{i}$ in the objective functions (\ref{eg3.7}), (\ref{modle2_relaxed}) used during initial partitioning, the number of initial partitions generated (\texttt{num\_Init}), and the number $p$ of edges pruned in the MST.
After coarsening, we apply size-dependent strategies for initial partitioning: Prim-based MST pruning (Algorithm~\ref{Alg:MSTer}) when $|\mathcal{V}| \leq 35,000$, and representative-node clustering (Algorithm~\ref{ALG:RNDGD}) otherwise. 

In addition, to conserve space in the tables, we use the following abbreviations for the partitioning tools: K for KaHyPar, H for hMetis, S for K-SpecPart, M for Mt-KaHyPar, and O for our proposed method. 
The corresponding parameter settings are summarized in Table~\ref{table3ParameSetting}, and a sensitivity analysis of these parameters will be conducted in subsequent sections.
\begin{table}[!htbp]
\centering
\caption{Configuration of parameter values.}
\label{table3ParameSetting}

\begin{tabular}{c|c|c|c}
\hline
parameter & Values & parameter & Values  \\
\hline
  $\lambda_{1}$ & 0.9,\:0.5,\:0.15,\:0.015 &$\xi_{1}$ & 0.5,\: 0.15  \\
  $\lambda_{2}$ & 1, \:0.9,\: 0.8 & $\xi_{2}$ & 1,\:0.8,\: 0.2  \\
  $p$& $\sqrt{\frac{n}{2}}$, \:$\frac{n}{5k}$&  \texttt{num\_Init} & 10 \\
\hline
\end{tabular}
\end{table}

\subsection{Results of ISDP98 benchmarks with actual weights.}
To enhance the generality of the problem formulation, we incorporate weight settings into our algorithm.
Table~\ref{ISPD98w234} presents a performance comparison between our algorithm, KaHyPar, hMetis, Mt-KaHyPar and K-SpecPart on the ISPD98 benchmark suite, evaluated under partition numbers \(k = 2, 3, 4\). 
Among the 18 test cases, the best result for each instance is highlighted in red. Additionally, the table reports average performance relative to KaHyPar for overall comparison.

\begin{table*}[htbp]
  \centering
  \caption{Comparison of partitioners and our algorithm on ISPD98 benchmarks for $k$ = 2, 3, 4.}
   \setlength{\tabcolsep}{2pt} 
 \resizebox{\linewidth}{!}{
    \begin{tabular}{||c||ccccc||ccccc||ccccc||}
    \hline
   \multirow{2}[4]{*}{ISPD98} & \multicolumn{5}{c||}{$k=2$}      & \multicolumn{5}{c||}{$k=3$}            & \multicolumn{5}{c||}{$k=4$} \\
\cline{2-16}          & \multicolumn{1}{c|}{K} & \multicolumn{1}{c|}{H} & \multicolumn{1}{c|}{S} & \multicolumn{1}{c|}{M } & O     & \multicolumn{1}{c|}{K} & \multicolumn{1}{c|}{H} & \multicolumn{1}{c|}{S} & \multicolumn{1}{c|}{M } & O     & \multicolumn{1}{c|}{K} & \multicolumn{1}{c|}{H} & \multicolumn{1}{c|}{S} & \multicolumn{1}{c|}{M } & \multicolumn{1}{c||}{O } \\
    \hline
    ibm01 & \textcolor[rgb]{ 1,  0,  0}{\textbf{215.0 }} & \textcolor[rgb]{ 1,  0,  0}{\textbf{215 }} & \textcolor[rgb]{ 1,  0,  0}{\textbf{215 }} & \textcolor[rgb]{ 1,  0,  0}{\textbf{215.0 }} & 218   & \textcolor[rgb]{ 1,  0,  0}{\textbf{365.8 }} & 380   & 446   & 368.8  & 368   & \textcolor[rgb]{ 1,  0,  0}{\textbf{354.4 }} & 461   & 369   & 354.8  & 370   \\
    ibm02 & 287.2  & 297   & 296   & 270.8  & \textcolor[rgb]{ 1,  0,  0}{\textbf{266 }} & 399.2  & 367   & 367   & 429.0  & \textcolor[rgb]{ 1,  0,  0}{\textbf{359 }} & 563.4  & 587   & 573   & 578.6  & \textcolor[rgb]{ 1,  0,  0}{\textbf{534 }} \\
    ibm03 & 867.8  & 771   & 957   & 848.0  & \textcolor[rgb]{ 1,  0,  0}{\textbf{695 }} & 1,372.8  & 1,250  & 1,277  & 1,297.2  & \textcolor[rgb]{ 1,  0,  0}{\textbf{1,216 }} & 1,574.2  & 1,912  & 1,913  & 1,593.0  & \textcolor[rgb]{ 1,  0,  0}{\textbf{1,487 }} \\
    ibm04 & \textcolor[rgb]{ 1,  0,  0}{\textbf{491.2 }} & 541   & 529   & 496.6  & 505   & 916.0  & 893   & \textcolor[rgb]{ 1,  0,  0}{\textbf{886 }} & 889.4  & 921   & 1,518.8  & 1,648  & 1,586  & 1,464.2  & \textcolor[rgb]{ 1,  0,  0}{\textbf{1,436 }} \\
    ibm05 & \textcolor[rgb]{ 1,  0,  0}{\textbf{1,720.8 }} & 1,747  & 1,721  & 1,722.0  & 1,773  & 2,898.4  & 3,025  & 3,106  & \textcolor[rgb]{ 1,  0,  0}{\textbf{2,860.4 }} & 3,189  & 3,387.6  & 3,648  & 3,681  & \textcolor[rgb]{ 1,  0,  0}{\textbf{3,386.6 }} & 3,460  \\
    ibm06 & 572.6  & 483   & 845   & 514.8  & \textcolor[rgb]{ 1,  0,  0}{\textbf{476 }} & 1,164.2  & \textcolor[rgb]{ 1,  0,  0}{\textbf{978 }} & 1,002  & 1,070.2  & 981   & 1,501.8  & 1,367  & 1,398  & 1,383.2  & \textcolor[rgb]{ 1,  0,  0}{\textbf{1,331 }} \\
    ibm07 & 788.8  & 792   & 803   & 756.8  & \textcolor[rgb]{ 1,  0,  0}{\textbf{743 }} & 1,300.2  & 1,482  & 1,351  & 1,306.6  & \textcolor[rgb]{ 1,  0,  0}{\textbf{1,212 }} & 2,117.0  & 2,150  & 2,074  & 2,018.6  & \textcolor[rgb]{ 1,  0,  0}{\textbf{1,899 }} \\
    ibm08 & 1,257.0  & 1,186  & \textcolor[rgb]{ 1,  0,  0}{\textbf{1,182 }} & 1,206.8  & 1,215  & 1,782.8  & 1,775  & 1,892  & 1,830.0  & \textcolor[rgb]{ 1,  0,  0}{\textbf{1,702 }} & 2,376.8  & 2,499  & 2,565  & 2,440.8  & \textcolor[rgb]{ 1,  0,  0}{\textbf{2,221 }} \\
    ibm09 & 519.2  & 523   & \textcolor[rgb]{ 1,  0,  0}{\textbf{519 }} & 555.6  & 539   & 1,142.8  & 1,303  & 1,380  & \textcolor[rgb]{ 1,  0,  0}{\textbf{1,120.2 }} & 1,126  & 1,554.4  & 1,595  & 1,577  & 1,513.4  & \textcolor[rgb]{ 1,  0,  0}{\textbf{1,478 }} \\
    ibm10 & 1,332.2  & 1,062  & 1,028  & 1,234.4  & \textcolor[rgb]{ 1,  0,  0}{\textbf{988 }} & \textcolor[rgb]{ 1,  0,  0}{\textbf{1,533.0 }} & 1,606  & 1,560  & 1,623.8  & 1,627  & 2,087.2  & 2,170  & 2,202  & 2,314.0  & \textcolor[rgb]{ 1,  0,  0}{\textbf{1,887 }} \\
    ibm11 & 767.0  & 780   & \textcolor[rgb]{ 1,  0,  0}{\textbf{763 }} & 775.0  & 791   & 1,512.8  & 1,548  & 1,537  & 1,524.2  & \textcolor[rgb]{ 1,  0,  0}{\textbf{1,437 }} & 2,127.4  & 2,161  & 2,119  & 2,136.4  & \textcolor[rgb]{ 1,  0,  0}{\textbf{1,939 }} \\
    ibm12 & \textcolor[rgb]{ 1,  0,  0}{\textbf{1,965.2 }} & 2,025  & 1,967  & 2,308.6  & 2,033  & 3,623.4  & 3,203  & 3,188  & 3,596.0  & \textcolor[rgb]{ 1,  0,  0}{\textbf{3,020 }} & 3,843.8  & 4,285  & 3,728  & 3,759.6  & \textcolor[rgb]{ 1,  0,  0}{\textbf{3,696 }} \\
    ibm13 & 1,053.2  & 894   & \textcolor[rgb]{ 1,  0,  0}{\textbf{846 }} & 1,005.2  & 880   & 1,691.0  & 1,673  & 1,659  & 1,648.6  & \textcolor[rgb]{ 1,  0,  0}{\textbf{1,462 }} & 2,038.2  & 2,097  & 2,028  & 2,007.4  & \textcolor[rgb]{ 1,  0,  0}{\textbf{1,824 }} \\
    ibm14 & 1,764.0  & \textcolor[rgb]{ 1,  0,  0}{\textbf{1,704 }} & 1,929  & 1,754.0  & 1,772  & 2,870.0  & 3,163  & 2,935  & \textcolor[rgb]{ 1,  0,  0}{\textbf{2,848.2 }} & 3,047  & \textcolor[rgb]{ 1,  0,  0}{\textbf{3,257.8 }} & 3,869  & 3,626  & 3,323.2  & 3,330  \\
    ibm15 & \textcolor[rgb]{ 1,  0,  0}{\textbf{1,986.6 }} & 2,123  & 2,474  & 2,361.2  & 2,004  & 4,042.4  & 3,933  & 3,866  & \textcolor[rgb]{ 1,  0,  0}{\textbf{3,822.2 }} & 3,862  & \textcolor[rgb]{ 1,  0,  0}{\textbf{4,888.2 }} & 4,938  & 5,005  & 5,025.8  & 4,922  \\
    ibm16 & 1,772.6  & 1,718  & 1,660  & \textcolor[rgb]{ 1,  0,  0}{\textbf{1,647.4 }} & 1,707  & 3,142.0  & 3,038  & 2,801  & 3,191.6  & \textcolor[rgb]{ 1,  0,  0}{\textbf{2,764 }} & 3,963.4  & 4,063  & 3,848  & 4,069.8  & \textcolor[rgb]{ 1,  0,  0}{\textbf{3,595 }} \\
    ibm17 & \textcolor[rgb]{ 1,  0,  0}{\textbf{2,281.8 }} & 2,535  & 2,301  & 2,294.8  & 2,335  & 3,638.4  & 4,017  & 3,757  & 3,678.4  & \textcolor[rgb]{ 1,  0,  0}{\textbf{3,549 }} & \textcolor[rgb]{ 1,  0,  0}{\textbf{4,733.2 }} & 6,153  & 5,977  & 5,043.8  & 4,914  \\
    ibm18 & 1,877.4  & 1,677  & \textcolor[rgb]{ 1,  0,  0}{\textbf{1,579 }} & 1,935.6  & 1,622  & \textcolor[rgb]{ 1,  0,  0}{\textbf{2,646.2 }} & 3,001  & 3,027  & 2,722.2  & 2,682  & 3,278.0  & 3,731  & 3,650  & 3,282.8  & \textcolor[rgb]{ 1,  0,  0}{\textbf{3,238 }} \\
    all   & 21,519.6  & 21,073  & 21,614  & 21,902.6  & 20,562  & 36,041.4  & 36,635  & 36,037  & 35,827.0  & 34,524  & 45,165.6  & 49,334  & 47,919  & 45,696.0  & 43,561  \\
    mean  & 1.000  & 1.113  & 1.019  & 1.002  & 0.949  & 1.000  & 1.013  & 1.012  & 0.997  & 0.956  & 1.000  & 1.090  & 1.050  & 1.005  & 0.956 \\
    \hline
    \end{tabular}}%
  \label{ISPD98w234}
\end{table*}%

Experimental results demonstrate that across different partition counts ($k=2, 3, 4$), the Ours algorithm significantly outperforms other partitioners in terms of partition quality (cut size). Compared to KaHyPar, the cut size is reduced by approximately 5\% on average, with a maximum reduction of 26\%.
Our algorithm achieves a 44\% improvement in solution quality over K-SpecPart, 20\% over Mt-KaHyPar, and 22\% over hMetis on certain instances. Furthermore, the Ours algorithm delivers optimal or near-optimal partitioning results in most instances. Notably, in the 4-way partitioning scenario, over two-thirds of the results surpass those of KaHyPar, and all results exceed those of hMetis and K-SpecPart, underscoring its strong capability in high-quality partitioning tasks.

 \subsection{Results of Titan23 benchmarks}
The Titan23 benchmarks are significantly larger and more challenging compared to the ISPD98 benchmarks. The quality comparisons between our algorithm and other partitioners are presented in Table~\ref{tab:Titancutk234}.

\begin{table}[htbp]
  \centering
  \caption{Comparison of partitioners and our algorithm on Titan23 benchmarks for $k$ = 2, 3, 4.}
   \setlength{\tabcolsep}{2pt} 
 \resizebox{\linewidth}{!}{
    \begin{tabular}{||c||ccccc||ccccc||ccccc||}
    \hline
    \multirow{2}[4]{*}{Titan23} & \multicolumn{5}{c||}{$k=2$}      & \multicolumn{5}{c||}{$k=3$}            & \multicolumn{5}{c||}{$k=4$} \\
\cline{2-16}          & \multicolumn{1}{c|}{K} & \multicolumn{1}{c|}{H} & \multicolumn{1}{c|}{S} & \multicolumn{1}{c|}{M } & O     & \multicolumn{1}{c|}{K} & \multicolumn{1}{c|}{H} & \multicolumn{1}{c|}{S} & \multicolumn{1}{c|}{M } & O     & \multicolumn{1}{c|}{K} & \multicolumn{1}{c|}{H} & \multicolumn{1}{c|}{S} & \multicolumn{1}{c||}{M } & O   \\
    \hline
    bitcoin\_miner & 2,480.8  & \textcolor[rgb]{ 1,  0,  0}{\textbf{1,512 }} & 1,562  & 1,551.0  & 1,683  & 3,722.6  & 3,125  & \textcolor[rgb]{ 1,  0,  0}{\textbf{3,067 }} & 3,428.2  & 3,433  & 4,052.8  & 5,046  & 5,070  & 4,154.4  & \textcolor[rgb]{ 1,  0,  0}{\textbf{3,965 }}  \\
    bitonic\_mesh & 649.2  & 585   & \textcolor[rgb]{ 1,  0,  0}{\textbf{582 }} & 694.4  & 613   & 1,212.6  & 1,027  & \textcolor[rgb]{ 1,  0,  0}{\textbf{982 }} & 1,164.8  & 1,098  & 1,523.8  & 1,479  & 1,493  & 1,504.0  & \textcolor[rgb]{ 1,  0,  0}{\textbf{1,332 }} \\
    cholesky\_bdti & 1,285.0  & \textcolor[rgb]{ 1,  0,  0}{\textbf{1,154 }} & \textcolor[rgb]{ 1,  0,  0}{\textbf{1,154 }} & 1,435.0  & 1,179  & 2,067.2  & 1,824  & \textcolor[rgb]{ 1,  0,  0}{\textbf{1,820 }} & 2,079.4  & 1,901  & 2,370.0  & \textcolor[rgb]{ 1,  0,  0}{\textbf{2,106 }} & \textcolor[rgb]{ 1,  0,  0}{\textbf{2,106 }} & 2,451.6  & 2,117  \\
    cholesky\_mc & 471.0  & \textcolor[rgb]{ 1,  0,  0}{\textbf{280 }} & 282   & 671.0  & 304   & 1,094.8  & 1,038  & \textcolor[rgb]{ 1,  0,  0}{\textbf{955 }} & 1,186.0  & 1,056  & \textcolor[rgb]{ 1,  0,  0}{\textbf{1,117.0 }} & 1,231  & 1,135  & 1,277.6  & 1,245  \\
    dart  & 857.6  & 850   & \textcolor[rgb]{ 1,  0,  0}{\textbf{803 }} & 828.0  & 806   & 1,402.4  & 1,337  & 1,377  & 1,425.8  & \textcolor[rgb]{ 1,  0,  0}{\textbf{1,292 }} & 1,715.6  & 1,753  & 2,045  & 1,746.8  & \textcolor[rgb]{ 1,  0,  0}{\textbf{1,652 }} \\
    denoise & 411.4  & 701   & \textcolor[rgb]{ 1,  0,  0}{\textbf{410 }} & 424.4  & 511   & 905.6  & 967   & 963   & 852.0  & \textcolor[rgb]{ 1,  0,  0}{\textbf{748 }} & 1,100.0  & 1,746  & 1,107  & 1,107.4  & \textcolor[rgb]{ 1,  0,  0}{\textbf{986 }} \\
    des90 & 471.2  & 379   & \textcolor[rgb]{ 1,  0,  0}{\textbf{374 }} & 617.4  & 407   & 726.6  & 675   & \textcolor[rgb]{ 1,  0,  0}{\textbf{621 }} & 825.8  & 677   & 1,047.6  & 1,017  & 920   & 1,064.8  & \textcolor[rgb]{ 1,  0,  0}{\textbf{877 }} \\
    directrf & 615.6  & 638   & \textcolor[rgb]{ 1,  0,  0}{\textbf{513 }} & 653.6  & 571   & \textcolor[rgb]{ 1,  0,  0}{\textbf{743.4 }} & 833   & 812   & 758.0  & 873   & 1,174.0  & 1,456  & 1,202  & 1,194.2  & \textcolor[rgb]{ 1,  0,  0}{\textbf{1,120 }} \\
    gsm\_switch & 1,613.0  & 4,435  & 1,829  & \textcolor[rgb]{ 1,  0,  0}{\textbf{1,612.0 }} & 1,834  & \textcolor[rgb]{ 1,  0,  0}{\textbf{3,304.4 }} & 6,541  & 4,825  & 3,393.6  & 3,704  & 3,898.2  & 13,484  & 5,073  & \textcolor[rgb]{ 1,  0,  0}{\textbf{3,819.4 }} & 4,298  \\
    LU\_Network & \textcolor[rgb]{ 1,  0,  0}{\textbf{523.0 }} & 525   & 524   & 523.4  & 531   & 786.8  & 904   & 884   & \textcolor[rgb]{ 1,  0,  0}{\textbf{785.8 }} & 791   & \textcolor[rgb]{ 1,  0,  0}{\textbf{1,380.0 }} & 1,432  & 1,424  & 1,403.8  & 1,528  \\
    LU230 & 4,472.8  & \textcolor[rgb]{ 1,  0,  0}{\textbf{3,309 }} & 3,363  & 4,692.0  & 3,618  & 6,587.4  & 5,143  & \textcolor[rgb]{ 1,  0,  0}{\textbf{4,758 }} & 6,526.6  & 5,324  & 7,750.8  & \textcolor[rgb]{ 1,  0,  0}{\textbf{5,944 }} & 6,701  & 7,930.2  & 6,221  \\
    mes\_noc & 694.8  & 648   & \textcolor[rgb]{ 1,  0,  0}{\textbf{632 }} & 821.2  & 661   & 1,533.6  & 1,228  & \textcolor[rgb]{ 1,  0,  0}{\textbf{1,184 }} & 1,425.2  & 1,375  & 1,725.6  & \textcolor[rgb]{ 1,  0,  0}{\textbf{1,462 }} & 1,466  & 1,676.6  & 1,495  \\
    minres & 308.4  & \textcolor[rgb]{ 1,  0,  0}{\textbf{206 }} & \textcolor[rgb]{ 1,  0,  0}{\textbf{206 }} & 329.4  & 242   & 436.8  & 416   & \textcolor[rgb]{ 1,  0,  0}{\textbf{416 }} & 488.0  & 520   & 716.4  & \textcolor[rgb]{ 1,  0,  0}{\textbf{582 }} & 582   & 705.2  & 691  \\
    neuron & 327.4  & 243   & \textcolor[rgb]{ 1,  0,  0}{\textbf{242 }} & 344.6  & 412   & 509.2  & 503   & \textcolor[rgb]{ 1,  0,  0}{\textbf{480 }} & 501.8  & 559   & 597.8  & \textcolor[rgb]{ 1,  0,  0}{\textbf{587 }} & 589   & 644.4  & 706  \\
    openCV & 1,271.6  & 449   & \textcolor[rgb]{ 1,  0,  0}{\textbf{434 }} & 936.6  & 1,438  & 1,410.8  & \textcolor[rgb]{ 1,  0,  0}{\textbf{897 }} & 911   & 1,480.8  & 1,637  & 1,961.4  & \textcolor[rgb]{ 1,  0,  0}{\textbf{1,286 }} & 1,281  & 1,878.0  & 2,097  \\
    segmentation & \textcolor[rgb]{ 1,  0,  0}{\textbf{106.0 }} & 118   & 119   & 128.0  & 139   & 450.6  & 513   & 451   & 466.6  & \textcolor[rgb]{ 1,  0,  0}{\textbf{439 }} & 508.8  & 515   & \textcolor[rgb]{ 1,  0,  0}{\textbf{508 }} & 513.2  & 548  \\
    SLAM\_spheric & \textcolor[rgb]{ 1,  0,  0}{\textbf{1,061.0 }} & \textcolor[rgb]{ 1,  0,  0}{\textbf{1,061 }} & \textcolor[rgb]{ 1,  0,  0}{\textbf{1,061 }} & \textcolor[rgb]{ 1,  0,  0}{\textbf{1,061.0 }} & \textcolor[rgb]{ 1,  0,  0}{\textbf{1,061 }} & 3,410.4  & 3,362  & \textcolor[rgb]{ 1,  0,  0}{\textbf{3,233 }} & 3,369.6  & 3,522  & 4,106.4  & 3,939  & \textcolor[rgb]{ 1,  0,  0}{\textbf{3,827 }} & 4,402.8  & 4,202  \\
    sparcT1\_chip2 & \textcolor[rgb]{ 1,  0,  0}{\textbf{866.6 }} & 1,226  & 870   & 912.4  & 904   & 1,361.6  & 1,451  & 1,675  & 1,479.4  & \textcolor[rgb]{ 1,  0,  0}{\textbf{1,332 }} & 1,671.6  & 1,896  & 1,608  & 1,656.0  & \textcolor[rgb]{ 1,  0,  0}{\textbf{1,588 }} \\
    sparcT1\_core & \textcolor[rgb]{ 1,  0,  0}{\textbf{974.0 }} & 1,027  & 977   & 974.0  & 1,014  & \textcolor[rgb]{ 1,  0,  0}{\textbf{1,829.0 }} & 2,693  & 2,127  & 1,844.8  & 1,859  & 2,297.8  & 3,176  & 3,094  & \textcolor[rgb]{ 1,  0,  0}{\textbf{2,272.4 }} & 2,363  \\
    sparcT2\_core & \textcolor[rgb]{ 1,  0,  0}{\textbf{1,181.0 }} & 1,272  & 1,188  & 1,181.0  & 1,288  & 2,440.4  & 3,423  & 2,427  & 2,415.2  & \textcolor[rgb]{ 1,  0,  0}{\textbf{2,284 }} & 2,987.0  & 4,262  & 3,973  & \textcolor[rgb]{ 1,  0,  0}{\textbf{2,969.4 }} & 3,048  \\
    stap\_qrd & 561.4  & 379   & 464   & 556.8  & \textcolor[rgb]{ 1,  0,  0}{\textbf{379 }} & 896.6  & 768   & \textcolor[rgb]{ 1,  0,  0}{\textbf{682 }} & 1,031.0  & 723   & 1,146.6  & 1,083  & 1,048  & 1,162.6  & \textcolor[rgb]{ 1,  0,  0}{\textbf{999 }} \\
    stereo\_vision & 227.8  & 178   & \textcolor[rgb]{ 1,  0,  0}{\textbf{169 }} & 170.8  & 188   & 385.6  & 390   & \textcolor[rgb]{ 1,  0,  0}{\textbf{347 }} & 380.0  & 391   & 510.8  & \textcolor[rgb]{ 1,  0,  0}{\textbf{460 }} & 511   & 464.2  & 475  \\
    all   & 21,430.6  & 21,175  & 17,758  & 21,118.0  & 19,783  & 37,218.4  & 39,058  & 34,997  & 37,308.4  & 35,538  & 45,360.0  & 55,942  & 46,763  & 45,999.0  & 43,553  \\
    mean  & 1.000  & 0.993  & 0.856  & 1.032  & 0.965  & 1.000  & 1.037  & 0.954  & 1.019  & 0.982  & 1.000  & 1.152  & 1.011  & 1.011  & 0.978   \\
    \hline
    \end{tabular}}
    \label{tab:Titancutk234}%
\end{table}%

Among the 22 test cases in this group, the proposed algorithm achieved the lowest total cut sizes for 2, 3, and 4-way partitioning, at 19{,}783; 35{,}538; and 43{,}553 respectively. These results surpass those of {KaHyPar}, {hMetis}, and {Mt-KaHyPar} across all $k$ values, and are also superior to {K-SpecPart} at $k=2$ and $k=4$, demonstrating its enhanced effectiveness in minimizing cut hyperedges. This advantage is further quantified by the average cut size ratios relative to KaHyPar: $0.965\times$ (2-way), $0.982\times$ (3-way), and $0.978\times$ (4-way), indicating an overall improvement of approximately 2\%--4\%. Significant performance gains were observed on instances such as \texttt{des90}, \texttt{LU230}, \texttt{mes\_noc}, and \texttt{stap\_qrd}. Furthermore, on certain instances, the proposed algorithm yields a 24\% improvement over K-SpecPart, 55\% over Mt-KaHyPar, and 72\% over hMetis.

\subsection{Effectiveness tests}
On both ISPD98 and Titan23 benchmarks, our algorithm incurs significantly higher computational costs than KaHyPar, with average runtimes being 1.8$\times$ to nearly 3$\times$ longer, depending on the value of $k$. While our method demonstrates superior partition quality, this result highlights a clear trade-off and indicates substantial room for improving its computational efficiency. The comparison was conducted under a "fair time budget" based on the virtual instance methodology in~\cite{sanders2023high} to ensure an equitable comparison with the stochastic KaHyPar.

For each test instance, we first execute our algorithm once, recording its runtime \( T_{\text{Our}} \) and solution quality \( Q_{\text{Our}} \). We then run KaHyPar repeatedly with different random seeds, accumulating execution times until the total time \( T_{\text{KaHyPar}} \) satisfies:
\[
\frac{\max(T_{\text{Our}}, T_{\text{KaHyPar}})}{\min(T_{\text{Our}}, T_{\text{KaHyPar}})} \leq \rho
\]
with threshold \( \rho = 1.2 \). This allows the faster KaHyPar to use additional time for multiple attempts. We compare the best solution quality \( Q_{\text{KaHyPar-best}} \) found by KaHyPar against \( Q_{\text{Our}} \).
When a single KaHyPar run exceeds \( T_{\text{Our}} \), we include it with a relaxed threshold \( \rho = 1.5 \), ensuring fair evaluation on challenging instances.

Figure \ref{sametime} shows the solution quality comparison between our algorithm and KaHyPar under equal time budgets for $k=2,3,4$ partitions on both ISPD98 and Titan23 benchmark sets. Among the six comparison groups, our algorithm excels notably in the ISPD98 dataset at $k=4$, successfully solving 61\% of the problems with the best values, surpassing KaHyPar's 38.8\%. In the ISPD98 dataset, when the performance ratio exceeds 1.05, our algorithm demonstrates a rapid improvement in solving capability, overtaking KaHyPar and approaching a near-optimal performance of 1. In contrast, for the Titan23 dataset at $k=2$ and $k=3$, the quality of our algorithm falls short of KaHyPar's, though at $k=4$, the performance of both algorithms becomes comparable.

\begin{figure}
  \centering
  \includegraphics[width=15cm, height=10cm]{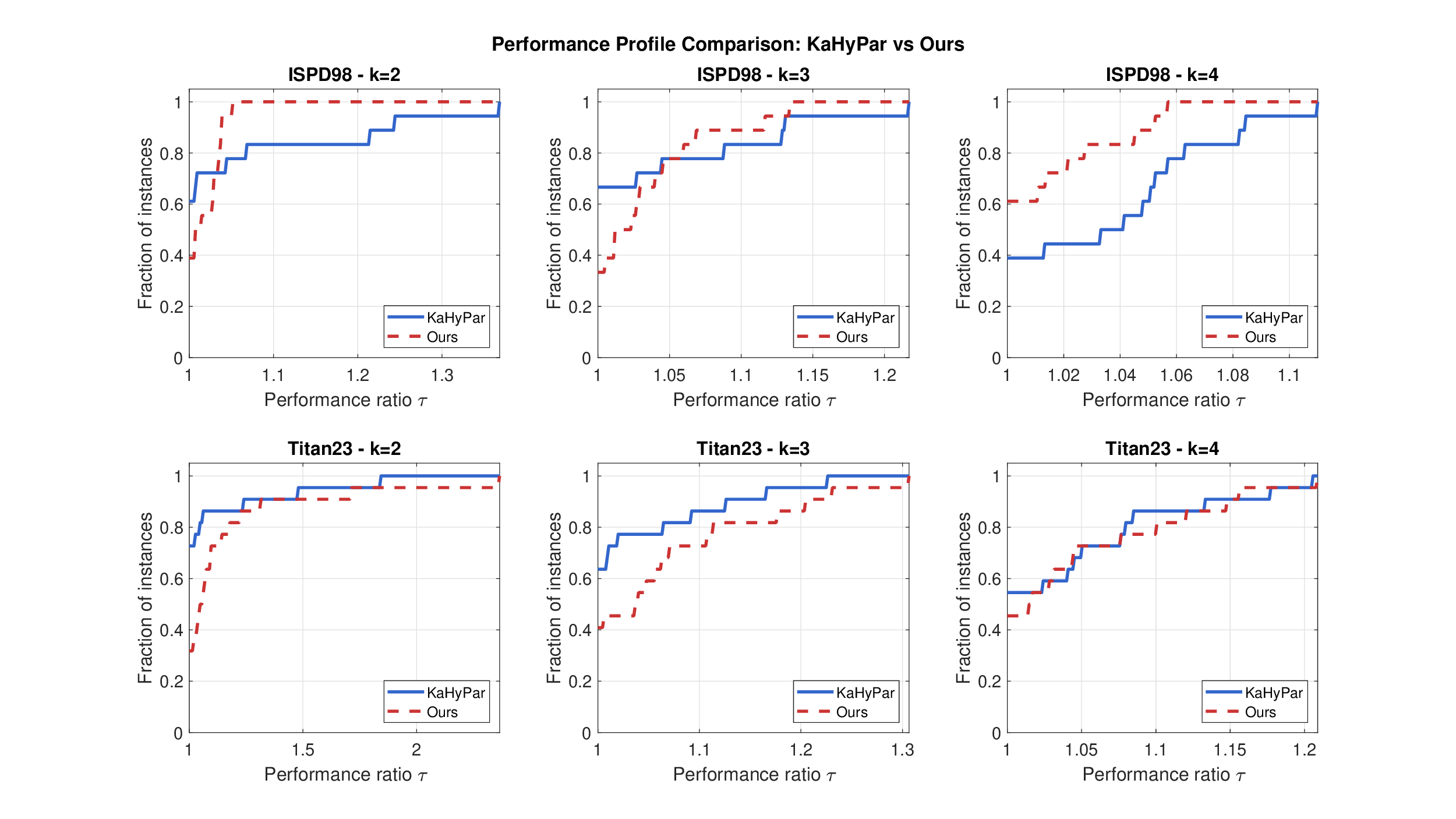}
  \caption{Solution quality comparison against KaHyPar under equal time budgets on ISPD98 and Titan23 benchmarks for $k=2,3,4$.}
  \label{sametime}
\end{figure}


The Average Performance Ratio (APR) evaluates an algorithm's overall quality by averaging its relative performance across all instances. It is computed as:
\[
\text{APR} = \frac{1}{n} \sum_{i=1}^{n} \frac{P_i}{P_i^*},
\]
where $P_i$ is the algorithm's performance on instance $i$, $P_i^*$ is the best performance achieved by any compared algorithm on that instance, and $n$ is the total number of instances. An APR value of 1 indicates optimal performance. 

The comparison between our method and KaHyPar is presented in Figure~\ref{wins}, which shows the performance comparison between the two algorithms on the ISPD98 and Titan23 datasets, revealing distinct performance trends. On the ISPD98 dataset, the performance of our algorithm (Ours) consistently improves as the number of partitions $k$ increases, demonstrating a clear advantage at $k=4$. Both the average performance ratio (1.013) and the number of better solutions obtained (11) outperform KaHyPar (1.035 and 7). In contrast, on the Titan23 dataset, KaHyPar dominates at $k=2$ and $k=3$. However, as $k$ increases to 4, the gap between the two algorithms narrows significantly, and their performance becomes comparable.

\begin{figure}
  \centering
  \includegraphics[width=14cm, height=8cm]{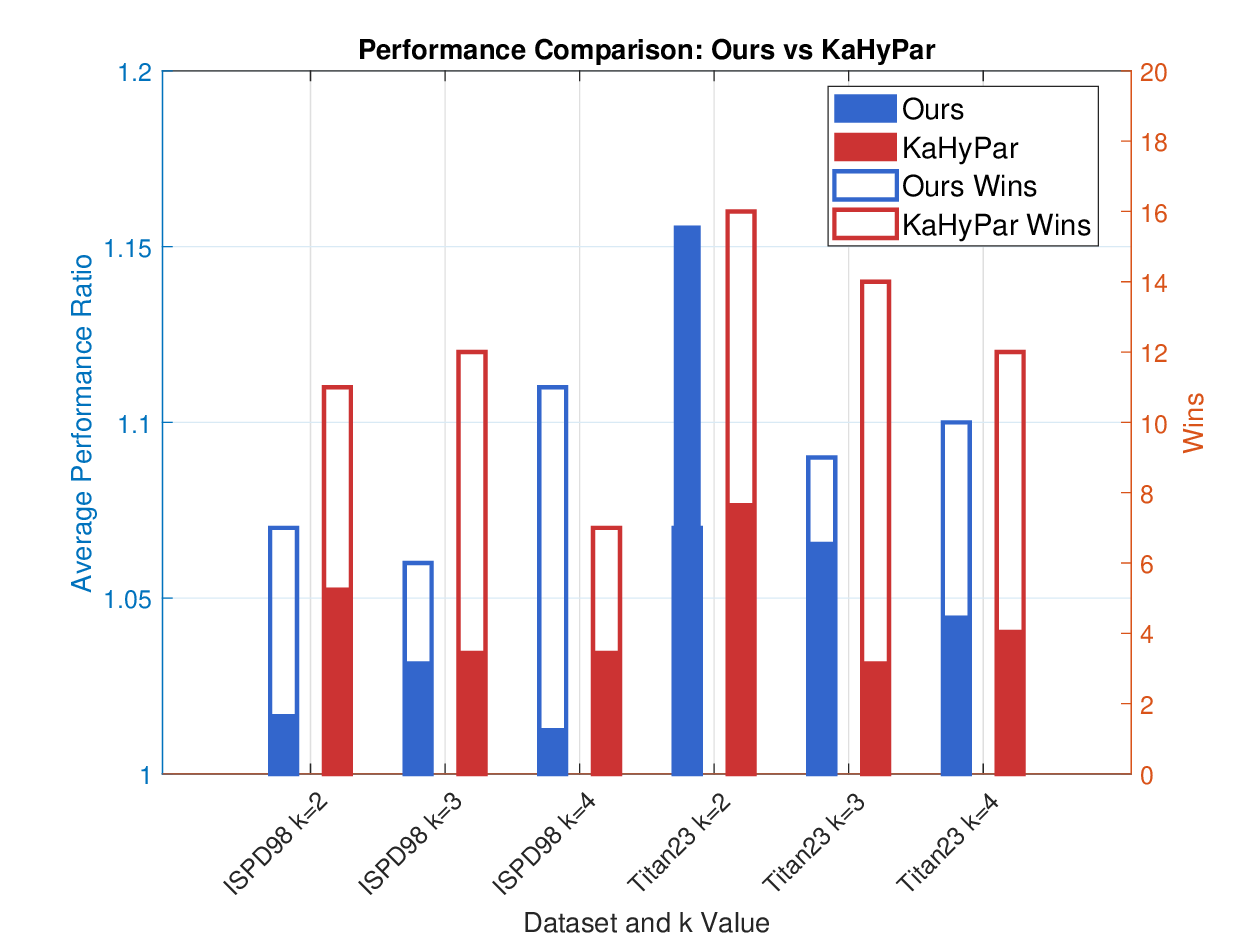}
  \caption{Solution quality comparison against KaHyPar under equal time budgets on ISPD98 and Titan23 benchmarks for $k=2,3,4$.}
  \label{wins}
\end{figure}


\subsection{Visual analysis of hMetis refinement results}

To validate the effectiveness of the improvement algorithm proposed in Section~\ref{sect5}, we applied our method to enhance the partitioning results of hMetis. Since KaHyPar reports the average result over five runs with different random seeds and does not provide a corresponding specific partition, we do not perform improvement analysis on KaHyPar.

The improvement results are shown in the boxplot in Figure~\ref{box}, which compares the cut sizes before and after refinement for different partition counts (2-way, 3-way, and 4-way). In the boxplot, the bold black line represents the median, while the top and bottom edges of the box correspond to the 75th percentile (Q3) and 25th percentile (Q1), covering the middle 50\% of the data. The whiskers show the range of non-outlier values, defined by \([Q1 - 1.5 \times \mathrm{IQR}, Q3 + 1.5 \times \mathrm{IQR}]\), where \(\mathrm{IQR} = Q3 - Q1\). Data points outside this range are considered outliers. The red dot above the box indicates the mean, showing its difference from the median.

\begin{figure}
  \centering
  \includegraphics[width=14cm, height=8cm]{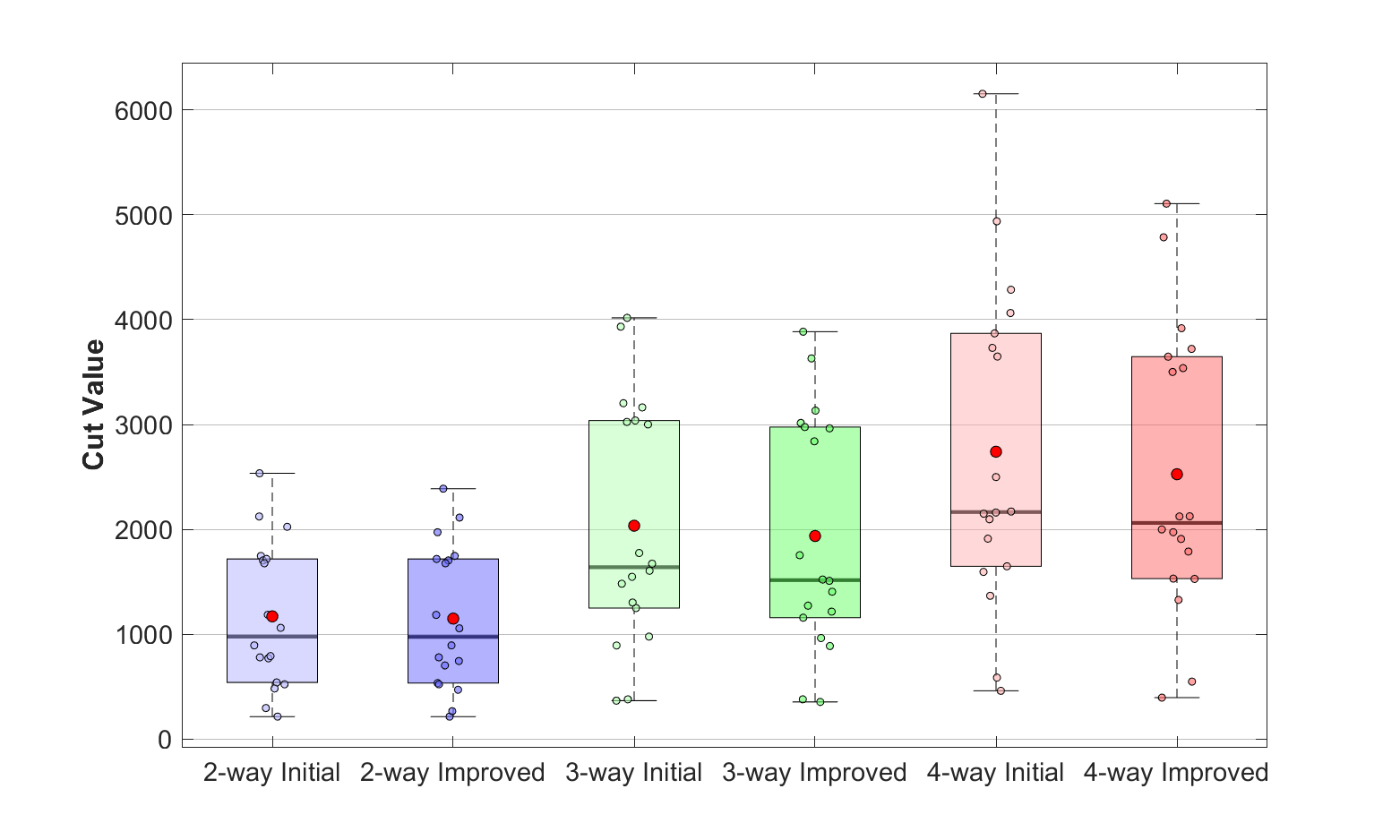}
  \caption{Comparison of initial and improved partitioning results for hMetis.}
  \label{box}
\end{figure}

As shown in the figure, the refined partitions exhibit lower cut sizes in most instances, with both the median and mean values outperforming the initial results, indicating a significant improvement in overall partitioning quality. Specifically, for 2-way, 3-way, and 4-way partitioning, the average cut sizes were reduced to 97.9\%, 95.3\%, and 92.5\% of the original values, respectively.

Moreover, the distributions of the refined results are more concentrated, with fewer outliers, reflecting improved algorithmic stability. Among all 54 partitioning results, 83\% showed improvements, with the maximum reduction in cut size reaching up to 16\% for a single instance. These findings further confirm the effectiveness and robustness of the proposed refinement strategy in improving partitioning quality.

\subsection{Summary of parameter sensitivity analysis}

To further examine the influence of key parameter settings on partitioning quality and runtime, we conduct three groups of sensitivity experiments on the ISPD98 benchmark under the $k=2$ partitioning setting. The results are presented in Figure~\ref{Parameteranalysis}, where the blue line denotes the total cut size and the red line denotes the total runtime. Each experiment adopts a single-variable control strategy to ensure fair comparison.


\textbf{(a) Clustering number $p$:} We randomly select two values of \(p\) within the range \([2k, n/2]\) and conduct tests on five groups. The first group in the figure uses the parameters from the literature.

\textbf{(b) Number of initial partitionings $num\_Init$:} Increasing the number of initial partitioning candidates from 2 to 20 slightly improves cut size but leads to linearly increasing runtime, showing a trade-off between quality and efficiency.

\textbf{(c) Weighting coefficient control:} To validate the rationality of the default values of parameters $\lambda_1$, $\lambda_2$, $\xi_1$, and $\xi_2$, we conducted systematic perturbation experiments. The experiments followed a controlled-variable approach, where only one parameter was altered at a time while the others remained fixed at their default values. Specifically, for each parameter, a number of configurations equivalent to the default parameter set in this study were randomly sampled from a uniform distribution over [0,1] and evaluated under the same experimental conditions.

The results demonstrate that the default parameter setting achieves an optimal balance between cut size and runtime. Any deviation from this configuration leads to a degradation in either solution quality or computational efficiency, with variations in $\xi_2$ having the most pronounced impact on algorithm performance.

From our findings, we observe that the default
setting of the hyperparameters is a reasonable choice.

\begin{figure*}
  \centering
  \includegraphics[width=18cm, height=4cm]{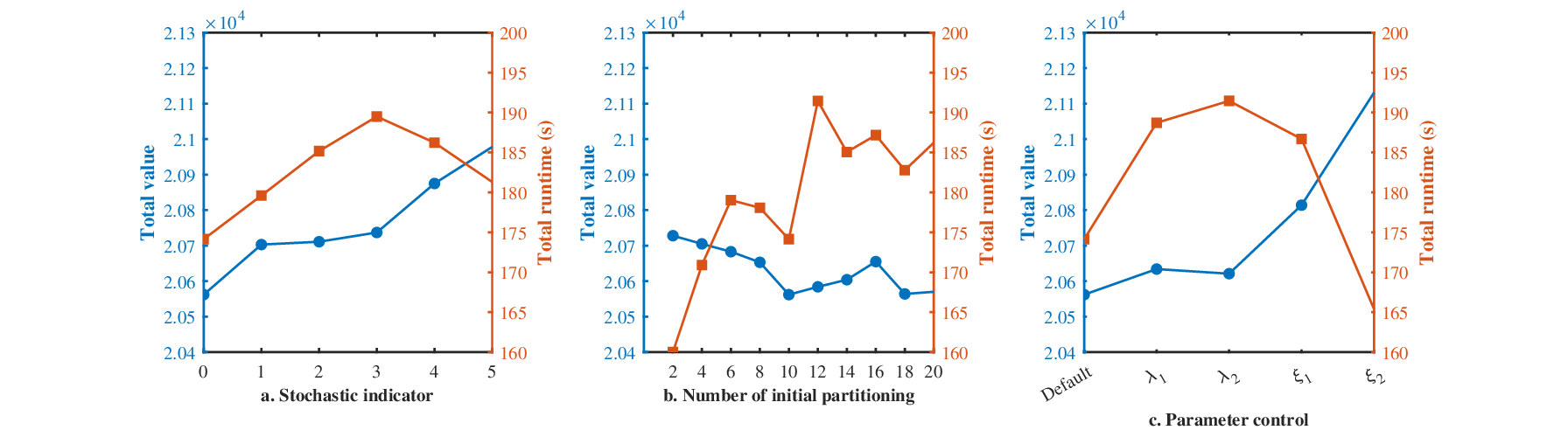}
  \caption{Sensitivity analysis of key parameters on partitioning quality and runtime.}
  \label{Parameteranalysis}
\end{figure*}

\section{Conclusion}\label{sect7}

This paper presents a novel multi-way graph partitioning method that integrates multi-objective optimization, MST construction, and a clustering strategy. The proposed two-stage framework effectively balances computational efficiency and partition quality across different data scales. Experimental results demonstrate that the framework achieves superior partitioning quality compared to state-of-the-art algorithms, exhibiting significant advantages particularly on weighted vertex sets. Furthermore, the proposed refinement strategy demonstrates strong generality, capable of effectively enhancing the results of existing partitioners. A comprehensive evaluation confirms that this work provides a high-quality and efficient solution for hypergraph partitioning, showcasing strong competitiveness and practical value.

\section*{Acknowledgments}
This research was supported by National Natural Science Foundation of China (No. 12261019), Natural Science Basic Research Program Project of Shaanxi Province (No. 2024JC-YBMS-019) and the Fundamental Research Funds for the Central Universities and the Innovation Fund of Xidian University (No. YJSJ25009).






\bibliographystyle{unsrt}        
\bibliography{thebibliography}


\end{document}